\documentclass{article} 
\usepackage{iclr2026_conference,times}
\usepackage{amsmath,amsfonts,bm}

\def\eqref#1{equation~\ref{#1}}

\def\1{\bm{1}}

\DeclareMathAlphabet{\mathsfit}{\encodingdefault}{\sfdefault}{m}{sl}
\SetMathAlphabet{\mathsfit}{bold}{\encodingdefault}{\sfdefault}{bx}{n}

\DeclareMathOperator*{\argmax}{arg\,max}

\usepackage[font=small,labelfont=bf]{caption}
\usepackage{hyperref}
\hypersetup{hidelinks}
\usepackage{url}
\usepackage{nicematrix}  
\usepackage{wrapfig}
\usepackage{algorithm}
\usepackage{algorithmic}
\usepackage{amssymb}            
\usepackage{mathtools}          
\usepackage{mathrsfs}           
\usepackage{graphicx}           
\usepackage[space]{grffile}     
\usepackage{lipsum}             

\usepackage{booktabs}    
\usepackage{multirow}    
\usepackage{makecell}
\usepackage{array}
\usepackage{threeparttable}
\usepackage{thmtools, thm-restate}
\usepackage{amsthm}
\usepackage{bbm}
\usepackage[skip=3pt]{caption}
\usepackage{subcaption} 
\title{Regularized Latent Dynamics Prediction \\ is a Strong Baseline for \\ Behavioral Foundation Models}
\author{%
\makebox[\textwidth][c]{%
\setlength{\tabcolsep}{12pt}%
\renewcommand{\arraystretch}{1.25}%
\begin{tabular}{ccc}
\textbf{Pranaya Jajoo}\textsuperscript{1,2}\thanks{Correspondence to \texttt{pranayajajoo@ualberta.ca} or \texttt{pranayajajoo@gmail.com}} &
\textbf{Harshit Sikchi}\textsuperscript{4,\textdagger} &
\textbf{Siddhant Agarwal}\textsuperscript{4,\textdagger} \\
\textbf{Amy Zhang}\textsuperscript{4} &
\textbf{Scott Niekum}\textsuperscript{5} &
\textbf{Martha White}\textsuperscript{1,2,3} \\
\end{tabular}%
}
\\[2.0em]
\textsuperscript{1} Department of Computing Science, University of Alberta, Canada \\
\textsuperscript{2} Alberta Machine Intelligence Institute (Amii)\\
\textsuperscript{3} Canada CIFAR AI Chair \\
\textsuperscript{4} The University of Texas at Austin \\
\textsuperscript{5} University of Massachusetts Amherst \\
\textsuperscript{\textdagger} Equal contribution
}

\iclrfinalcopy 
\begin{document}

\maketitle


\begin{abstract}
Behavioral Foundation Models (BFMs) produce agents with the capability to adapt to any unknown reward or task. These methods, however, are only able to produce near-optimal policies for the reward functions that are in the span of some pre-existing \emph{state features}, making the choice of state features crucial to the expressivity of the BFM. 
As a result, BFMs are trained using a variety of complex objectives and require sufficient dataset coverage, to train task useful spanning features. In this work, we examine the question: are these complex representation learning objectives necessary for zero-shot RL? Specifically, we revisit the objective of self-supervised next-state prediction in latent space for state feature learning, but observe that such an objective alone is prone to increasing state-feature similarity, and subsequently reducing span.
We propose an approach, Regularized Latent Dynamics Prediction (\texttt{RLDP}), that adds a simple orthogonality regularization to maintain feature diversity and can match or surpass state-of-the-art complex representation learning methods for zero-shot RL. Furthermore, we empirically show that prior approaches perform poorly in low-coverage scenarios where \texttt{RLDP} still succeeds.

\begin{center}
\textbf{Code:}
\href{https://github.com/pranayajajoo/RLDP}{\texttt{github.com/pranayajajoo/RLDP}}
\end{center}
\end{abstract}

\section{Introduction}

Zero-shot reinforcement learning (RL)~\citep{touati2023does} is a problem setting where we learn an agent that can solve \emph{any} task in the environment without any additional training or planning, after an initial pretraining phase. Zero-shot RL has significant practical potential in developing generalist agents with wide applicability. For instance, robotics applications, like robotic manipulation or drone navigation, often require agents to solve a wide variety of unknown tasks. A general-purpose household robot needs to possess the capability to flexibly adapt to various household chores without explicit training for each new task.

Behavioral Foundation Models (BFMs) have been shown to be promising for zero-shot RL (~\cite{touati2023does, agarwal2024proto}). BFMs are trained on a dataset of reward-free interactions, with the aim to provide a near-optimal policy for a wide class of reward functions without additional learning or training during test-time. 
BFMs are trained by a) learning a state representation $\varphi:s\rightarrow \mathbb{R}^d$, and b) learning a policy $\pi$ conditioned on a latent vector $z\in \mathbb{R}^d$, where the $z$ can be seen as a task embedding for reward $r(s)=\varphi(s)^\top z$. In this way, the BFM consists of a space of policies, where different policies can be extracted by querying the learned policy using different $z$. At test time, given any reward function $r^{test}(s)$, the near-optimal policy $\pi_{z_{r^{test}}}$ is obtained zero-shot by solving for $z_{r^{test}}$ such that $r^{test}(s)\approx \varphi(s)^\top z_{r^{test}}$. This assumption on the reward is also used for successor features (\cite{barreto2018successor}), which consist of the discounted cumulative sum of feature vectors under a policy. Successor features zero-shot produce the action-values for a new reward, given by vector $z$, and BFM approaches often use successor features to learn the policies. 

The performance of the BFM relies heavily on the state representation, which is both used to extract $z$ for the reward and for the policy. State-of-the-art methods~\cite{fb, agarwal2024proto,park2024hilbert} usually learn state representations that retain information to represent \emph{successor measures} under a wide class of policies. A successor measure captures the (discounted) state visitation of a policy, given any starting state. They are the generalization of successor representations \cite{dayan1993improving} to continuous states, and have a simple linear relationship to successor features \cite{fb}. They can therefore be used to both encourage learning a generalizable state representation as well as simultaneously learning the successor features for the BFM. Successor measures are usually learned for an explicitly defined class of policies (\cite{agarwal2024proto}) or implicitly by first defining a class of reward functions (\cite{touati2023does,park2024hilbert}) and considering optimal policies for those reward functions as the set of policies. The main intuition behind predicting successor measures as a target for state representation learning is that representations sufficient to explain future state-visitation for a wide range of policies capture features that are relevant for sequential decision making under various reward functions. 

Unfortunately, learning state representations by estimating successor measures requires iteratively applying Bellman evaluation backups or Bellman optimality backups, both of which are known to result in a variety of learning difficulties. They can suffer from various forms of bias~\cite{thrun2014issues,fujimoto2019off,lu2018non,fu2019diagnosing} and can suffer from feature collapse due to the instability inherent in bootstrapping in the function approximation regime~\citep{kumar2021dr3}. Using Bellman backups to learn a representation requires choosing a class of policies or a class of reward functions a priori. Further, because the state representation is trained from a batch of offline data, unless chosen carefully, the policies may select out-of-distribution actions, leading to incorrect generalization and degenerate representations. 

A simple alternative that sidesteps these issues is to use latent dynamics learning: predicting future latent states given the current state and the sequence of actions. Learning the state representation by predicting the latent dynamics has the benefit of being independent of the policy and thus avoids using Bellman backups and these out-of-distribution issues. This work investigates the following question:
{
\begin{center}
    \textit{Does latent next-state prediction produce state features that enable performant zero-shot RL?}
    \end{center}
}
Our investigation is inspired by the work of~\cite{fujimoto2025towards}, which showed that using dynamics prediction losses as auxiliary losses boosted performance of a single-task RL agent. Unlike the single task RL setting examined by~\cite{fujimoto2025towards}, we find that in its naive form, this objective leads to a mild form of feature collapse where the representation of different states increase in similarity over training. This collapse results in poor zero-shot RL performance when evaluated on a number of downstream tasks. With a simple orthogonality regularization to prevent collapse, we show that the representations learned are competitive and present a scalable alternative to representations learned via complex successor measure estimation methods for zero-shot RL. 

In summary, the contributions of this paper are as follows. 1. We propose regularized latent-dynamics prediction (\texttt{RLDP}) as a simple alternative to learn state features for zero-shot RL. We identify as well as mitigate feature collapse plaguing latent dynamics prediction. 2. We show that our method remains competitive through an extensive empirical evaluation of representations for task generalization across a variety of domains, in online and offline RL settings, including in humanoid with a large state-action space. 3. We show that the \texttt{RLDP} objective can learn performant policies in low-coverage settings where other methods fail.
\section{Related Work}

\textbf{Unsupervised RL }
encompasses the class of algorithms that enable learning general-purpose skills and representations without relying on reward signal in the data. Works that have focused on intent or skill discovery have used diversity-driven objectives \citep{eysenbach2018diversity, achiam2018variational}, maximizing mutual information (\cite{warde2018unsupervised}, \cite{eysenbach2018diversity}, \cite{achiam2018variational}, \cite{eysenbach2021information}) or minimizing the Wasserstein distance (\cite{park2023metra}) between latents and the induced state-visitation distribution. These discovered skills can be used to compose optimal policies for several rewards. 
Our work, on the other hand, focuses on learning representations capable of producing optimal value functions for any arbitrary function reward specification. 

There are also a variety of pre-training approaches for representations that can be fine-tuned for downstream control.
Recent pre-training approaches (e.g., \cite{ma2022vip}; \cite{nair2022r3m}) borrow self-supervised techniques such as temporal contrastive objectives to extract embeddings from large-scale datasets (\cite{grauman2022ego4d}). 
HILP (\cite{park2024hilbert}) goes beyond standard masked autoencoding approaches by using Hilbert-space representations to preserve temporal dynamics. Auxiliary objectives involve complementary predictive tasks to get richer semantic or temporal structures (\cite{agarwal2021behavior, schwarzer2020data}). Although representations from auxiliary objectives can accelerate policy learning, a new policy still needs to be learned from scratch for each new reward function. 

\textbf{Behavioral Foundation Models } are obtained by training an RL agent in an unsupervised manner using task-agnostic reward-free offline transitions. 
Forward-Backward representations (\cite{fb}) 
and PSM (\cite{agarwal2024proto}) provide one such framework for training BFMs by learning representations that capture a set of successor measures, on which several successive works are based. Fast Imitation with BFMs (\cite{pirotta2023fast}) demonstrates the ability of successor-measure–based BFMs to imitate new behaviors from just a few demonstrations, while~\cite{sikchi2025fast} builds upon this by fine-tuning the BFM's latent embedding space, yielding 10-40\% improvement over their zero-shot performance.
Recent progress in imitation learning has led to the development of BFMs tailored for humanoid control tasks (\cite{peng2022ase}, \cite{won2022physics}, \cite{luo2023universal}, \cite{tirinzoni2025zero}) which can produce diverse behaviors trained using human demonstration data. Our work differs from these in that it provides a new, simpler state-representation learning objective for training BFMs. 


\section{Preliminaries}\label{sec:preliminaries}
We consider a reward-free Markov Decision Process (MDP) (\cite{puterman2014markov}) which is defined as a tuple $\mathcal{M}=(\mathcal{S},\mathcal{A},P,d_0, \gamma)$, where $\mathcal{S}$ and $\mathcal{A}$ respectively denote the state and action spaces, $P$ denotes the transition dynamics with $P(s'|s,a)$ indicating the probability of transitioning from $s$ to $s'$ by taking action $a$, $d_0$ denotes the initial state distribution and $\gamma \in (0,1)$ specifies the discount factor. A policy $\pi$ is a function $\pi: \mathcal{S} \rightarrow \Delta(\mathcal{A})$ mapping a state s to probabilities of action in $\mathcal{A}$. We denote by $\text{Pr}(\cdot \mid s, a, \pi)$ and $\mathbb{E}[\cdot \mid s, a, \pi]$ the
probability and expectation operators under state-action sequences $(s_t, a_t)_{t \geq 0}$ starting at $(s, a)$ and
following policy $\pi$ with $s_t \sim P( \cdot \mid s_{t-1}, a_{t-1})$ and $a_t \sim \pi(\cdot \mid s_t)$. Given any reward function $r: \mathcal{S} \rightarrow \mathbb{R}$, the Q-function of $\pi$ for $r$ is $Q^\pi_r(s, a) := \sum_{t \geq 0} \gamma^{t}\mathbb{E}[r(s_{t+1}) \mid s, a, \pi]$.

\textbf{A Behavioral Foundation Model (BFM) using Successor Features } is a tuple $(\varphi, \psi, \pi_z)$ for state features, $\varphi: \mathcal{S} \rightarrow \mathcal{Z}$, successor features defined as $\psi(s, a, \pi) = \mathbb{E}_{\pi}[\sum_t \gamma^t \varphi(s_t) | s, a]$ and the task-conditioned learned policy $\pi_z$ that inputs any task embedding $z$ that corresponds to a reward function $r_z = \varphi^\top z$. To define this policy $\pi_z$, we use action-values produced by the successor features $\psi$. 
%
The action-value function for reward $r_z$ and a fixed policy $\pi$ can be written as 
\begin{equation*}
    Q^\pi_z(s, a) = \mathbb{E}_{\pi}\Big[\sum_t \gamma^t \varphi(s_t)^\top z | s, a\Big]
    = \mathbb{E}_{\pi}\Big[\sum_t \gamma^t \varphi(s_t)^\top | s, a\Big]z
    = \psi(s, a, \pi)^\top z
\end{equation*} 
The policy $\pi_z$ is the optimal policy for reward $r_z$, obtained by iteratively greedifying over $Q^\pi_z(s, a)$. Using the successor feature notation, we can iteratively update $\pi_z$ for all states until it satisfies the following fixed-point equation 
\begin{equation}
    \pi_z(\cdot|s) = \argmax_a \psi(s, a, \pi_z)^\top z \quad \text{ for all $s$.}
\end{equation}
We overload notation and write $\psi(s, a, z)$ to mean $\psi(s, a, \pi_z)$, because later we will directly input $z$ into a network to learn these successor features for a near-optimal policy for $r_z$. 

The BFM can be used for any new
reward function as long as we can obtain the $z$ corresponding to that reward. This is straightforward to do if we are given a dataset $\rho$, as the corresponding $z$ can be extracted by solving the linear regression problem, $\min_z \mathbb{E}_\rho[(\varphi^\top z - r)^2] = (\varphi^\top \varphi)^{-1} \varphi^\top r$. 
Naturally, BFMs depend heavily on the choice of the state representation $\varphi$; how to learn an effective $\varphi$ is the subject of this work. 


\textbf{Learning the successor features } can be done using successor measures, instead of directly estimating the discounted sum of features $\varphi$. 
The \textit{successor measure} (\cite{dayan1993improving, blier2021learning}) of state-action $(s, a)$ under a policy $\pi$ is the
(discounted) distribution over future states obtained by taking action $a$ in state $s$ and following policy $\pi$
thereafter:
\begin{equation}
    M^\pi(s, a, X) := \sum_{t \geq 0} \gamma^t \text{Pr}(s_{t+1} \in X \mid s, a, \pi) \quad \forall X \subset \mathcal{S}.
\end{equation}
The action-value can be represented as, $Q^\pi(s, a) = \sum_{s^+}M^\pi(s, a, s^+)r(s^+)$. This simple linear relationship between action-value functions and successor measures is similar to that of successor features and has been exploited by recent works (\cite{fb, agarwal2024proto,park2024hilbert}) to train BFMs. 
It has been shown by \cite{fb} that parameterizing the successor measures as $M^{\pi_z}(s, a, s^+) = \psi^{\pi}(s, a, z)^\top\phi(s^+)$ yields $\psi(s, a, z)$ as successor features for the state feature $\varphi(s)=(\phi \phi^\top)^{-1}\phi(s)$(Theorem 12 of~\cite{fb}). 
Since, the closed form solution for $z$ for any reward function $r$ was $(\varphi \varphi^\top)^{-1} \varphi r$, using the parameterization of $M^\pi$ implies $z = \phi^\top r$. 

\textbf{To train the BFM}, we alternate between a successor measure learning phase to get and a policy improvement phase. The successor measure learning phase learns to model densities $M^{\pi_z}(s,a,s^+)$ using the contrastive objective (\cite{blier2021learning}):
\begin{multline}
\label{eq:evaluation_loss}
    \textbf{Successor-measure estimation:} \quad \mathcal{L}_{SM}(M^{\pi_z}) = -\mathbb{E}_{s, a, s' \sim \rho}[M^{\pi_z}(s, a, s')] \\ 
    + \frac{1}{2} \mathbb{E}_{s, a, s' \sim \rho
    ,s^+ \sim \rho}[(M^{\pi_z}(s, a, s^+) - \gamma\bar{M^{\pi_z}}(s', \pi_z(s'), s^+))^2].
\end{multline}
This objective can be used assuming a fixed state representation $\phi$, only training the successor features $\psi$ (as we will do for our approach), or allows for both $\phi$ and $\psi$ to be jointly optimized, as was done in the Forward-Backward (FB) algorithm (\cite{fb}), PSM (\cite{agarwal2024proto}) and HILP (\cite{park2024hilbert}).

The policy improvement step greedily optimizes the action-value function given by this successor measure
\begin{multline}
    \label{eq:policyupdate_1}
    \pi_z(s) = \argmax_a  Q^{\pi_z}(s, a) = \argmax_a \sum_{s^+}M^{\pi_z}(s, a, s^+)\cdot r(s^
    +) 
    \\= \argmax_a \sum_{s^+} [\psi(s,a,z)^\top (\phi(s^+)\cdot r(s^+))]
    \\= \argmax_a \psi(s,a,z)^\top \sum_{s^+} \phi(s^+)\cdot r(s^+) =  \argmax_a \psi(s,a,z)^\top z
\end{multline} 
In practice, we cannot directly set the policy to this argmax. Instead, we optimize the following loss. 
\begin{equation}
\label{eq:policy_improvement}
    \textbf{Policy Improvement:}  \quad\quad \mathcal{L}_{P}(\pi_z) = - \mathbb{E}_{a \sim \pi_z(s)}[\psi(s,a,z)^\top z]
\end{equation}
Appendix~\ref{ap:prior_approaches} provides a further overview of approaches to train BFMs. In this work, we leverage this machinery for BFMs and focus on a new approach to estimate the state representations $\phi$.

\section{Method} \label{sec:method}

This method can be broadly divided into two parts: representation learning and zero-shot RL using successor features. 
The state representation encoder is trained using latent dynamics prediction with diversity regularization. We will show that these representations lead to a reduction in the prediction error for successor measures for any policy. Leveraging these robust state embeddings, we then pretrain a Behavioral Foundation Model (BFM) to predict successor measures, enabling zero-shot inference of near-optimal policies for unseen reward functions. We refer to this method as \texttt{RLDP} (\textbf{R}egularized \textbf{L}atent \textbf{D}ynamics \textbf{P}rediction) based Behavioral Foundation Policies.

\subsection{Learning Representations with Regularized Latent Dynamics Prediction}
\label{sec:full_method}
Zero-shot RL based on successor features relies on learning a state representation denoted by $\phi(s)$. This state representation will define the span of reward functions that the zero-shot RL method is guaranteed to output optimal policies for. 

The primary representation learning objective is unrolled latent dynamics prediction. We learn a state representation encoder $\phi: \mathcal{S} \rightarrow \mathbb{R}^d$, ($\mathcal{Z} = \mathbb{R}^d$)  and a latent state-action representation encoder $g: \mathbb{R}^d\times \mathcal{A} \rightarrow \mathbb{R}^d$ such that latent dynamics can be expressed as $\phi(s')=g(\phi(s),a)^\top w$ with some learned weights $w$, informing our loss function for representation learning. A sub-sequence of horizon $H$ is sampled from the offline interaction dataset $\rho$ given by $\tau^i = \{s^i_0,a^i_0,s^i_1,a^i_1,...,s^i_{H-1},a^i_{H-1},s^i_H\}$. A sequence of future latent states $h^i_{1:H}$ are obtained by encoding the initial state $h^i_0 = \phi(s^i_0)$ and unrolling using the defined dynamics model $h^i_{t+1} = g(h^i_t,a_t)^\top w$. Then the objective is to predict the encoded future latent states:
\vspace{-1mm}                        
\begin{equation}\label{eq:latent_dynamics_prediction}
    \mathcal{L}_d(\phi,g, w) = \mathbb{E}_{\tau^i\sim d^O}\left[\left\|\sum_{t=1}^H h^i_t - \bar{\phi}(s^i_t)\right\|^2\right],
\end{equation}
where $\tau^i = \{s^i_0,a^i_0,s^i_1,a^i_1,...,s^i_{H-1},a^i_{H-1},s^i_H\}$, $h^i_0 = \phi(s^i_0)$, 
and $\bar{\phi}$ is the slowly moving encoder target, which is periodically set to $\phi$.

\begin{wrapfigure}[]{r}{0.49\textwidth}  
  \vspace{-10pt}                        
  \centering
  \includegraphics[width=0.40\textwidth]{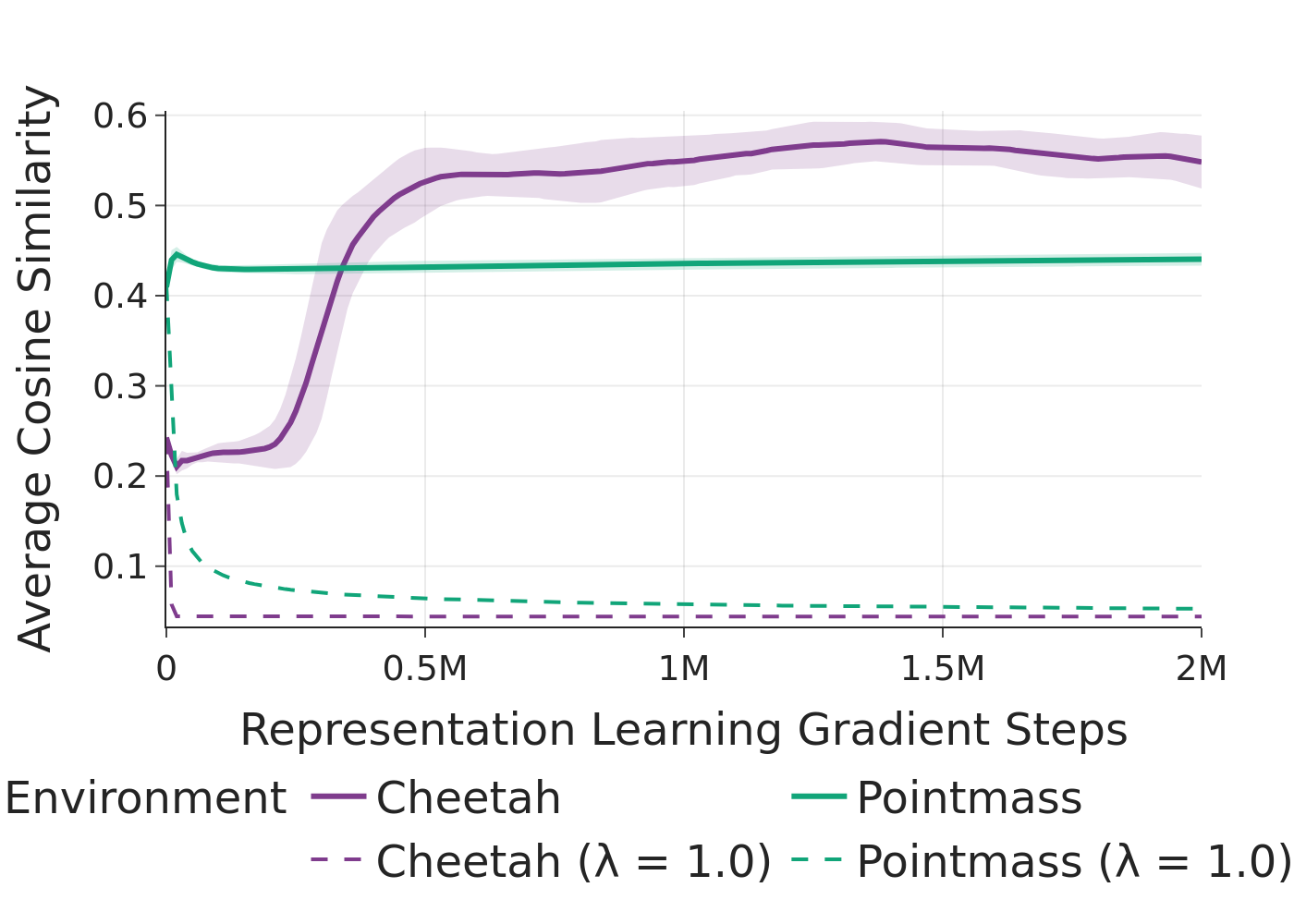}
  \caption{{Average Cosine similarity between state-representations sampled uniformly from the training dataset}. Feature similarity increases over the course of training; once adding our orthogonality regularizer (with $\lambda = 1$), we obtain more diverse representations. Shaded region shows standard deviation over 4 seeds.}
  \label{fig:ortho_ablations_1}
\end{wrapfigure}
Latent dynamics models have been shown to significantly improve sample efficiency for single task RL when models are used for planning~\citep{hansen2022temporal}, learning~\citep{Hafner2020Dream}, or as representations~\citep{fujimoto2025towards} for model-free RL, but their suitability as general-purpose representations for multi-task and zero-shot RL remains understudied. Most successful methods~\citep{fb,agarwal2024proto} for zero-shot RL train representations to predict successor measures. However, directly estimating successor measures requires learning future state-occupancies under a predefined set of policies. This poses a problem in the low-coverage setting as Bellman backups with policies that choose out of distribution action will result in incorrect predictions and negatively affect representation learning. In contrast, latent dynamics prediction is a policy-independent representation learning objective.


However, solely learning with the latent dynamics objective can lead to convergence to a collapsed solution. This is unsurprising as trivial solutions of predicting a constant zero vector achieves a perfect loss in equation~\ref{eq:latent_dynamics_prediction}. To combat this, prior works~\citep{grill2020bootstrap} have proposed the use of a semi-gradient update where a stop-gradient is used for target $h_{t+1}$ in equation~\ref{eq:latent_dynamics_prediction} along with a slowly updating target. However, we find these techniques insufficient to maintain representation diversity. We investigate this by computing the cosine similarity of state representations as a function of gradient steps trained via minimizing equation~\ref{eq:latent_dynamics_prediction} on an offline dataset collected by an exploration algorithm RND~\citep{burda2018exploration}. Figure~\ref{fig:ortho_ablations_1} shows that while the solutions do not collapse, there is an increase in feature similarity over the course of learning, which we refer to as a \textit{mild} form of collapse. As the space of reward functions is spanned by state features, such an increase in feature similarity directly reduce the class of reward functions for which we can learn optimal policies and negatively impact task generalization. 



\textbf{Mitigating collapse in latent dynamics prediction:}
In order to prevent the mild form of feature collapse discussed earlier, we propose to add an auxiliary regularization objective that encourages diversity.
Orthogonal regularization has been also studied in self-supervised learning \citep{he2024preventingdimensionalcollapseselfsupervised, bansal2018canwegain} as a way to mitigate collapse. 
We project all state representations $\phi$ as well as predicted latent next-state $g(\phi(s),a)^\top w$ in a hypersphere: $\mathbb{S}^{d-1} = \{x \in \mathbb{R}^d: \|x\|_2= \sqrt{d}\}$ and regularize by maximizing diversity between any two states. We ablate the choice of hyperspherical normalization on $g$ in Appendix~\ref{sec:appendix_RLDP_encarch} and observe it to give consistent improvements. We note that a similar regularization was applied to state features in the implementation 
for Forward-Backward representations~\citep{touati2023does} to encourage solution identifiability and uniqueness. In the case of latent dynamics prediction this step becomes crucial to mitigate the increase in representation similarity. 

\begin{wrapfigure}[28]{r}{0.5\textwidth}  
  \centering
\scriptsize
\includegraphics[width=0.5\textwidth]{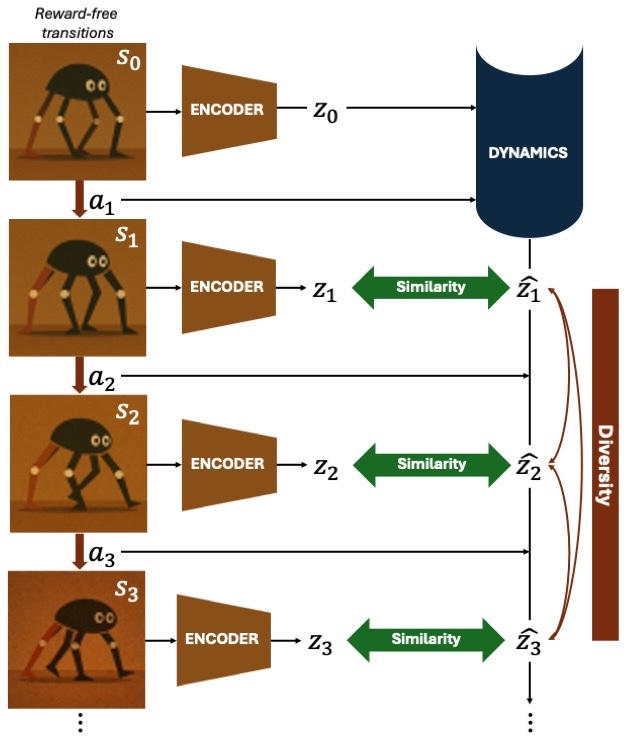}
  \caption{\texttt{RLDP} combines latent next state prediction + regularization for diversity (an orthogonality regularizer) to learn representations for BFMs.}
  \label{fig:method_figure}
\end{wrapfigure}
Define $\mu$ for the state marginal of the offline distribution and the feature second-moment matrix $\Sigma_\phi:=\mathbb{E}_{s\sim\mu}[\phi(s)\phi(s)^\top]$. The orthogonal regularization loss takes the following form:
\begin{equation}\label{eq:ortho_regularizer}
\mathcal{L}_r(\phi)=\left\|\Sigma_\phi-I_d\right\|_F^2.
\end{equation}
where $\phi \in \mathbb{S}^{d-1}$. Our final loss is a weighted combination of dynamics prediction combined with orthogonal diversity regularization
\begin{equation}\label{eq:rldp_full_loss}
   \! \mathcal{L}_{RLDP}(\phi,g, w) \!=\! \mathcal{L}_d(\phi,g, w) +\lambda \mathcal{L}_r(\phi)
\end{equation}
where $\lambda$ controls the regularization strength. We visualize this loss in figure \ref{fig:method_figure}, where the encoder is given by $\phi$, the dynamics by $g$ and $w$ and the diversity across encoded states is encouraged with $\mathcal{L}_r$. We find that adding this regularization prevents collapse, shown in figure \ref{fig:ortho_ablations_1}, even with a relatively small regularization coefficient $\lambda=0.01$. We evaluate the impact of orthogonality regularization further with other coefficients $\lambda$ in Appendix section~\ref{sec:appendix_RLDP_orthogonality}.

\textbf{What does a small \texttt{RLDP} loss guarantee for control? }
We state the main result for one-step RLDP ($H=1$). Let $\mathcal{L}_{RLDP,1}$ denote Equation~\ref{eq:rldp_full_loss} in this setting, let
$J(\pi):=\mathbb{E}_{\pi,P}[\sum_{t\geq0}\gamma^t r(s_{t+1})]$
be the downstream return, and let $\pi^\star \in \arg\max_\pi J(\pi)$ denote an optimal policy in the true MDP, and let $\hat{\pi}$ denote the policy obtained by greedification with respect to the learned action-value function $\hat{Q}$, whose Bellman equation is defined using the learned latent dynamics model. Thus, our bound compares the return of the optimal policy in the true environment with the true-environment return of the policy induced by control using the learned latent dynamics. We use $D_\rho(\pi^\star,\widehat\pi)$ to measure the mismatch, relative to the offline state-action distribution, between the discounted occupancies of these two policies. The actor greedification error is denoted by $\varepsilon_{\mathrm{act}}$; it is zero for an exactly greedy actor.

At an encoder synchronization ($\bar\phi=\phi$), the orthogonality regularizer keeps linear reward and action-value readouts well conditioned, while the one-step dynamics loss controls their Bellman residual. Together these yield a direct control guarantee.

\begin{restatable}[One-step RLDP control bound]{lemma}{bound}
\label{lem:bound}
Assume $|r(s)|\leq R_{\max}$, the reward and learned action values are linear in $\phi$, the latent Bellman equation holds on the offline support, and the occupancy mismatch $D_\rho(\pi^\star,\widehat\pi)$ is finite. Under the precise norm and coverage assumptions in Appendix~\ref{app:bound_proof}, if $\mathcal{L}_{RLDP,1}<\lambda$, then
\begin{equation}\label{eq:one_step_control_bound}
\boxed{
J(\pi^\star)-J(\widehat\pi)
\leq
\frac{D_\rho(\pi^\star,\widehat\pi)R_{\max}}{(1-\gamma)^2}
\sqrt{\frac{\mathcal{L}_{RLDP,1}}
{1-\sqrt{\mathcal{L}_{RLDP,1}/\lambda}}}
+\frac{\varepsilon_{\mathrm{act}}}{1-\gamma}.}
\end{equation}
\end{restatable}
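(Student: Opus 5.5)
The plan is a simulation-lemma argument carried out in latent space. The regularizer supplies conditioning and the dynamics loss supplies a Bellman residual.

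\textbf{Step 1: conditioning from the loss.} Since $\mathcal{L}_d\ge 0$, the hypothesis gives $\lambda\|\Sigma_\phi-I_d\|_F^2\le\mathcal{L}_{RLDP,1}$. Hence $\|\Sigma_\phi-I_d\|_{op}\le\|\Sigma_\phi-I_d\|_F\le\sqrt{\mathcal{L}_{RLDP,1}/\lambda}<1$. It follows that $\lambda_{\min}(\Sigma_\phi)\ge 1-\sqrt{\mathcal{L}_{RLDP,1}/\lambda}>0$. This is exactly the denominator in \eqref{eq:one_step_control_bound}, and it is why the hypothesis $\mathcal{L}_{RLDP,1}<\lambda$ is needed. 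Consequently, for any linear readout $f=\phi^\top\theta$,
\[
\|\theta\|_2^2\le \frac{\mathbb{E}_\mu[f^2]}{1-\sqrt{\mathcal{L}_{RLDP,1}/\lambda}}.
\]
I would apply this to the reward weights $r=\phi^\top z$. With $|r|\le R_{\max}$ it gives $\|z\|\le R_{\max}/\sqrt{1-\sqrt{\mathcal{L}/\lambda}}$. Applied to $\hat Q=\phi^\top\theta_Q$ (or to $\hat V$), it bounds $\|\theta_Q\|\lesssim R_{\max}/((1-\gamma)\sqrt{1-\sqrt{\mathcal{L}/\lambda}})$.

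\textbf{Step 2: one-step Bellman residual from the dynamics loss.} At synchronization $\bar\phi=\phi$ with $H=1$, we have $\mathcal{L}_d=\mathbb{E}_\rho\|g(\phi(s),a)^\top w-\phi(s')\|^2\le\mathcal{L}_{RLDP,1}$. The learned $\hat Q$ satisfies the latent Bellman equation on the offline support: the next-state value is evaluated at the predicted latent $\hat h'=g(\phi(s),a)^\top w$ rather than at $\phi(s')$. Because value and reward are linear in $\phi$, the true Bellman residual of $\hat Q$ at $(s,a)$ is
\[
\gamma\,\theta^\top\bigl(\hat h'-\mathbb{E}[\phi(s')\mid s,a]\bigr),
\]
where $\theta$ collects the relevant linear weights. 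By Cauchy--Schwarz and Jensen,
\[
\|\delta\|_{L^2(\rho)}\le \gamma\|\theta\|\sqrt{\mathcal{L}_d}.
\]
Combining with Step 1 gives
\[
\|\delta\|_{L^2(\rho)}\lesssim \frac{R_{\max}}{1-\gamma}\sqrt{\frac{\mathcal{L}_{RLDP,1}}{1-\sqrt{\mathcal{L}_{RLDP,1}/\lambda}}}.
\]
The $\gamma$ factor can be absorbed into the constant.

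\textbf{Step 3: performance difference.} I would decompose
\[
J(\pi^\star)-J(\hat\pi)=\bigl[J(\pi^\star)-\hat J(\pi^\star)\bigr]+\bigl[\hat J(\pi^\star)-\hat J(\hat\pi)\bigr]+\bigl[\hat J(\hat\pi)-J(\hat\pi)\bigr].
\]
The middle term is at most $\varepsilon_{\mathrm{act}}/(1-\gamma)$ by near-greediness of $\hat\pi$, via the performance-difference lemma. Each outer term is handled by the simulation/telescoping identity
\[
J(\pi)-\hat J(\pi)=\frac{1}{1-\gamma}\,\mathbb{E}_{d^\pi}[\delta].
\]
I would change measure to $\rho$, defining $D_\rho(\pi^\star,\hat\pi)$ (in the appendix) as the combined $L^2(\rho)$ norm of the density ratios $d^{\pi^\star}/\rho$ and $d^{\hat\pi}/\rho$. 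Cauchy--Schwarz then gives $\mathbb{E}_{d^\pi}[\delta]\le\|d^\pi/\rho\|_{L^2(\rho)}\|\delta\|_{L^2(\rho)}$. Multiplying the $1/(1-\gamma)$ from this identity with the $1/(1-\gamma)$ already inside $\|\theta\|$ produces the $(1-\gamma)^{-2}$ in \eqref{eq:one_step_control_bound}.

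\textbf{Main obstacle.} The delicate point is Step 2. The learned $\hat Q$ is evaluated at predicted latents, which may lie off the image of $\phi$ and hence off the region where Step 1 conditions the readout. The fix is to keep everything linear in the latent, so the error is a pure inner product and only $\|\theta\|$ matters. It is also necessary to ensure $\hat h'$ and $\phi$ live on the same hypersphere so the linear extension is meaningful. I would state this linearity and support assumption explicitly as one of the appendix's ``norm and coverage assumptions''. I would likewise define $D_\rho$ precisely so that the $\sqrt{\cdot}$ form, rather than a sup-norm, suffices.
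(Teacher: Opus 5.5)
Your Step~1 is exactly the paper's conditioning argument. Step~3, however, has a real gap.

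\textbf{Step 3.} The model return $\hat J(\pi)$ is never defined by the hypotheses. The latent model yields $\hat Q$ only through a Bellman equation on the offline support. Predicted latents leave $\phi(\mathcal S)$, where a policy on $\mathcal S$ cannot act. And only the greedy $\hat Q$, not some model evaluation of $\pi^\star$, is assumed linear. As a result, the identity $J(\pi)-\hat J(\pi)=\frac{1}{1-\gamma}\mathbb{E}_{d^\pi}[\delta]$, with one residual $\delta$ serving both policies, can hold only for a policy exactly greedy for $\hat Q$, so not for $\pi^\star$. The ``model performance-difference'' middle term also has no MDP to live in.

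The missing idea is to pivot both returns on the single number $\mathbb{E}_\nu[\hat V(s_0)]$, with $\hat V=\max_a\hat Q$, and to telescope along \emph{real} trajectories:
\[
J(\pi)-\mathbb{E}_{\nu}[\hat V(s_0)]=\frac{1}{1-\gamma}\,\mathbb{E}_{d^\pi_\nu}\bigl[\Delta+\hat Q-\hat V\bigr],
\qquad
\Delta(s,a):=\mathbb{E}_{s'\sim P(\cdot\mid s,a)}\bigl[r(s')+\gamma\hat V(s')\bigr]-\hat Q(s,a).
\]
For $\pi^\star$, drop the term using $\hat Q-\hat V\le 0$. For $\hat\pi$, use $\hat V-\hat Q\le\varepsilon_{\mathrm{act}}$ on its support. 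There is no middle term; this is where $\varepsilon_{\mathrm{act}}/(1-\gamma)$ comes from. The two residual terms also enter with opposite signs against the same $\Delta$. They therefore combine into $\int(q_{\pi^\star}-q_{\hat\pi})\,\Delta\,d\rho_{SA}$. One Cauchy--Schwarz then gives the lemma with $D_\rho=\|q_{\pi^\star}-q_{\hat\pi}\|_{L^2(\rho_{SA})}$, as defined in the appendix. Bounding the two terms separately, as you propose, only puts $\|q_{\pi^\star}\|+\|q_{\hat\pi}\|$ in its place, which is weaker than what is stated.

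\textbf{Step 2.} This step also needs repair. Write $r=u^\top\phi$ and $\hat Q(s,a)=w_a^\top\phi(s)$. Then $\hat V(s')=\max_b w_b^\top\phi(s')$ is not linear in $\phi$. So the residual is not an inner product with $\hat h'-\mathbb{E}[\phi(s')\mid s,a]$, and conditioning applies to the per-action weights $w_a$, not to $\hat V$. In addition, the latent Bellman equation evaluates the reward at the predicted latent. This produces an unscaled reward error $u^\top(\mathbb{E}[\phi(s')\mid s,a]-\hat h')$ alongside the $\gamma$-term.

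The fix is the Lipschitz property of a maximum of linear maps, applied per sample $s'$:
\[
|\Delta(s,a)|\le\bigl(\|u\|_2+\gamma\max_b\|w_b\|_2\bigr)\,\mathbb{E}\bigl[\|\hat h'-\phi(s')\|_2\mid s,a\bigr],
\]
followed by Jensen. Set $\eta=\|\Sigma_\phi-I_d\|_F$, and use your Step~1 together with the assumption $\max_a\|w_a^\top\phi\|_{L^2(\mu)}\le R_{\max}/(1-\gamma)$. That assumption must be imposed; it cannot be derived. The prefactor is then $\frac{R_{\max}}{\sqrt{1-\eta}}\bigl(1+\frac{\gamma}{1-\gamma}\bigr)=\frac{R_{\max}}{(1-\gamma)\sqrt{1-\eta}}$ exactly, so nothing is absorbed.

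This also dissolves your ``main obstacle''. Only weight norms enter, so $\hat h'$ need not lie in $\phi(\mathcal S)$ or on the sphere.
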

Thus, for controlled coverage mismatch, the learned policy approaches the optimal policy as the RLDP loss and actor error vanish. The Bellman-residual proof, all assumptions, and the multi-step control extension are deferred to Appendix~\ref{app:bound_proof}.
\vspace{-2mm}
\subsection{Zero-shot RL with \texttt{RLDP} Representations}
\label{sec:policy_learning}

The first step to use \texttt{RLDP} is to train the representations from a batch of reward-free offline environment transitions. The \texttt{RLDP} representation loss given in equation~\ref{eq:rldp_full_loss} does not rely on reward, because it only uses the latent dynamics prediction loss and the orthogonality regularizer. It is straightforward to optimize this loss on the batch of offline data, to obtain a learned state representation $\phi$. Key choices to be made include the regularization coefficient $\lambda$, the length of the rollout $H$ for the latent dynamics prediction loss and the update frequency for encoder target $\bar{\phi}$.

The next step is to train the BFM, by alternating successor measure
estimation and policy improvement. The \texttt{RLDP} representations are kept frozen in the successor measure parameterization $M^{\pi_z}(s, a, s^+) = \psi^{\pi}(s, a, z)^\top\phi(s^+)$ and $\psi(s,a,z)$ and $\pi_z$ are trained using Eq. \ref{eq:zsrl_loss} and Eq. \ref{eq:policy_improvement} respectively.
\begin{multline}
\label{eq:zsrl_loss}
 \quad \mathcal{L}_{zsrl}(\psi) = -\mathbb{E}_{s, a, s' \sim \rho}[\psi(s,a,z)\phi(s')] \\ 
    + \frac{1}{2} \mathbb{E}_{s, a, s' \sim \rho
    ,s^+ \sim \rho}[(\psi(s,a,z)\phi(s^+) - \gamma{\bar\psi(s',\pi_z(s'),z)\phi(s^+)})^2]
\end{multline}
Following prior work, in our experiments we consider variations of the policy improvement step (Eq~\ref{eq:policy_improvement}) where we use an expert regularization in the policy update~\citep{tirinzoni2025zero} to guide exploration during online RL for high-dimensional state-action spaces or use a behavior cloning regularization~\citep{fujimoto2021minimalist} when learning offline for low-coverage datasets. These modifications are discussed in detail in the next section. We provide the full representation and policy learning pipeline for \texttt{RLDP} in Appendix section~\ref{sec:zero_shot_rl}. 

\section{Experiments}


The goal of our experiments is to perform an extensive empirical study of the suitability of state representations learned by a regularized latent next-state prediction objective when compared to other methods that employ more complex strategies. In particular, we aim to answer the following questions: (a) Keeping all other learning factors similar, how does our method compare to baselines in enabling generalization to unseen reward functions? We compare the representations learned by training multi-task policies with zero-shot RL both in the offline setting and the online setting. (b) By avoiding querying actions out of distribution does~\texttt{RLDP} provide a robust choice for learning representations in low coverage datasets? (c) What design decisions are crucial to the success of our method?  We perform extensive ablation studies to understand our design choices. 


For all datasets,  we pretrain a BFM using the successor feature approach outlined in our method section~\ref{sec:method}. Each algorithm is given the same budget of gradient steps during pretraining, controlling the state representation dimension, and the final performance is obtained by taking the pre-trained model at the end and querying it for different task-rewards for 50 episodes. 

\subsection{Benchmarking Zero-Shot RL for Continuous Control}
\begin{wraptable}[18]{r}{0.7\textwidth} 
  \centering
  \scriptsize
  \setlength{\tabcolsep}{4pt} 
  \renewcommand{\arraystretch}{1.05}
  \begin{NiceTabular}{c|c|c|c|c|c}
    \toprule
        & Task                 & Random Features                 & FB                            & PSM                      & \texttt{RLDP}\\
    \midrule
    \multirow{4}{*}{\rotatebox{90}{\textbf{Walker}}}
      & Stand                 & 392.40$\pm$58.03                        & \textbf{918.29$\pm$28.83}     & \textbf{899.54$\pm$30.73} & \textbf{890.40$\pm$27.33} \\
      & Run                   & 75.39$\pm$20.97                         & 381.31$\pm$17.32              & \textbf{450.57$\pm$28.95} & 334.26$\pm$49.69 \\
      & Walk                  & 193.84$\pm$112.98                       & 779.29$\pm$63.60              & \textbf{875.61$\pm$33.44} & \textbf{779.77$\pm$137.16} \\
      & Flip                  & 132.02$\pm$67.85                        & \textbf{977.08$\pm$2.76}      & 621.36$\pm$75.62          & 492.94$\pm$22.79 \\
    \midrule
    \multirow{4}{*}{\rotatebox{90}{\textbf{Cheetah}}}
      & Run                   & 31.82$\pm$36.88                         & 129.39$\pm$37.63              & \textbf{181.85$\pm$54.17} & \textbf{157.12$\pm$29.92} \\
      & Run Backward          & 60.08$\pm$12.82                         & 142.41$\pm$36.77              & \textbf{158.64$\pm$18.56} & \textbf{170.52$\pm$15.30} \\
      & Walk                  & 147.52$\pm$155.66                       & \textbf{604.54$\pm$80.51}     & 576.98$\pm$209.45         & \textbf{592.92$\pm$104.66} \\
      & Walk Backward         & 272.77$\pm$42.40                        & 630.40$\pm$144.23             & \textbf{817.92$\pm$98.86} & \textbf{821.51$\pm$50.62} \\
    \midrule
    \multirow{4}{*}{\rotatebox{90}{\textbf{Quadruped}}}
      & Stand                 & 240.01$\pm$66.06                        & 732.59$\pm$101.33             & 708.03$\pm$34.99          & \textbf{794.94$\pm$43.25} \\
      & Run                   & 114.19$\pm$30.22                        & 425.15$\pm$52.02              & 404.32$\pm$23.26          & \textbf{457.41$\pm$74.70} \\
      & Walk                  & 137.65$\pm$47.57                        & 492.91$\pm$17.55              & \textbf{523.94$\pm$52.13} & \textbf{465.40$\pm$185.29} \\
      & Jump                  & 190.62$\pm$46.63                         & 567.27$\pm$48.90              & 549.57$\pm$15.86          & \textbf{733.32$\pm$55.30} \\
    \midrule
    \multirow{4}{*}{\rotatebox{90}{\textbf{Pointmass}}}
      & Top Left              & 258.59$\pm$183.56                       & \textbf{943.85$\pm$17.31}     & \textbf{924.20$\pm$10.64}          & \textbf{890.41$\pm$60.79} \\
      & Top Right             & 216.30$\pm$189.05                       & \textbf{550.84$\pm$282.41}    & \textbf{666.00$\pm$133.15}         & \textbf{795.47$\pm$21.10} \\
      & Bottom Left           & 193.32$\pm$90.37                        & \textbf{672.28$\pm$153.06}    & \textbf{800.93$\pm$15.62} & \textbf{805.17$\pm$20.44} \\
      & Bottom Right          & 64.08$\pm$72.21                         & \textbf{272.97$\pm$274.99}    & \textbf{123.44$\pm$138.82}& \textbf{193.38$\pm$167.63} \\
    \bottomrule
  \end{NiceTabular}
  \caption{Comparison (over 4 seeds) of zero‐shot RL performance between using an untrained initialized encoder, FB, PSM, and \texttt{RLDP} with representation size $d=512$. Bold indicates the best mean and any method whose mean plus one standard deviation overlaps with the best mean.}
  \label{table:dmc_rep_size_same}
\end{wraptable}
\textbf{Baselines:}
We broadly compare \texttt{RLDP} against commonly used state-of-the-art baselines for zero-shot RL such as: FB, PSM, and HILP. These baselines represent a set of diverse and strong approaches in the area of zero-shot RL. 


\subsubsection{Offline Zero-Shot RL}
\textbf{Setup:} We consider continuous control tasks from DeepMind control suite~\citep{dm_control} -- Pointmass, Cheetah, Walker, Quadruped under a similar setup considered by prior works in zero-shot RL. We use datasets from the ExoRL suite (\cite{exorl}) that are obtained by an exploratory algorithm RND (\cite{burda2018exploration}). Random features use representations from a randomly initialized NN encoder.

\textbf{Evaluation:} To evaluate the different zero-shot RL methods we take the pretrained policies and query them on a variety of tasks. For each environment, we consider 4 tasks similar to prior works~\citep{touati2023does,park2024hilbert,agarwal2024proto}. We conduct our experiments across two axes:  a) Table~\ref{table:dmc_rep_size_same} pretrains all the BFMs on same number of representation dimensions (512) and gradient steps.  For \texttt{RLDP}, we use an encoding horizon of 5. We train representations for 2 million steps and train policy for additional 3 million steps. b) Table~\ref{table:diff_rep_size}  in the Appendix compares against representation dimension for $\phi$ found to be best for prior methods and \texttt{RLDP} with the same number of gradient updates for pretraining each BFM.

\textbf{Results:} Overall, using learned representations (FB, PSM, \texttt{RLDP}) outperforms random features, confirming that representation learning is crucial for zero-shot RL. Among learned methods, PSM and \texttt{RLDP} generally achieve the strongest performance.
Furthermore, training FB and PSM baselines is sensitive to hyperparameters and we rely on author's implementation to tune hyperparameters.

\subsubsection{Online Zero-Shot RL}

 Previous section validated that \texttt{RLDP} representations lead to competitive zero-shot RL when the learning policies use offline interaction data. We explore if the learned representation enable competitive multi-task learning when agent is allowed interaction with the environment.

\begin{wrapfigure}[]{r}{0.55\textwidth}
  \centering
  \includegraphics[width=0.55\textwidth]{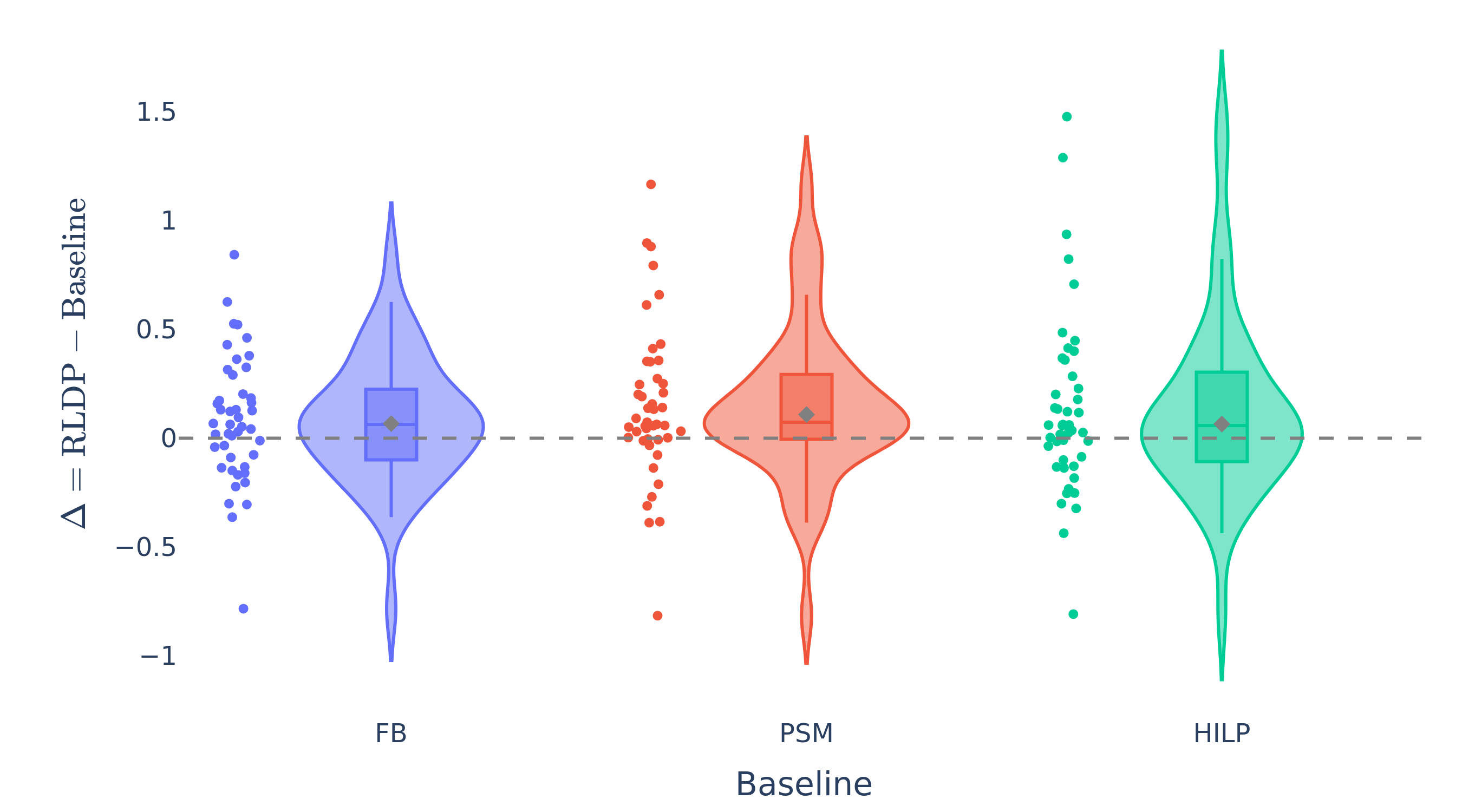}
  \caption{Pair-wise comparison of \texttt{RLDP} against prior offline representation learning methods using per-task oracle normalized performance differences ($\Delta$ = \texttt{RLDP} – Baseline) in SMPL Humanoid environment. The gray diamond represents the IQM (Interquartile Mean).}
  \label{fig:humenv_violin}
\end{wrapfigure}

\textbf{Setup:} We consider the SMPL (\cite{loper2023smpl}) Humanoid environment that aims to mimic real human embodiment and provides a complex learning challenge with a 358 dimensional observation space and a 69 dimensional action space. Due to the exploratory challenge of the environment, ~\cite{tirinzoni2025zero} presented a new approach, Conditional Policy Regularization (CPR), to guide RL learning regularized with expert real-human trajectories. CPR trains successor measures in a similar way as equation~\ref{eq:evaluation_loss} but adds a regularization objective to policy encouraging it to jointly maximize $Q$-function while staying close to expert. This allows for better exploration and more realistic motions.  Further implementation details can be found in Appendix sections~\ref{sec:appendix_experiment_humenv},~\ref{sec:appendix_implementation_online}.

\textbf{Evaluation:} Our representation learning phase is offline and we use the metamotivo 5M transition dataset\footnote{https://huggingface.co/facebook/metamotivo-M-1} collected from replay buffer of an online RL agent to learn state-representations and then use the CPR approach to train zero-shot policies. We train representations for 2 million gradient steps and policy for 20 million environment steps. The offline phase of representation learning helps us remove the exploration confounder and test the quality of representations obtained by different approaches. The evaluation is performed on the full suite of 45 tasks provided by \cite{tirinzoni2025zero}. For each task, we present the normalized scores with respect to fully-online trained  representations and policy in table~\ref{table:humenv_full} and we present the aggregates results across tasks in figure~\ref{fig:humenv_violin} over 4 seeds.

\textbf{Results:} Figure~\ref{fig:humenv_violin} summarizes the results of \texttt{RLDP} representations with respect to baseline methods across all 45 tasks. Positive values indicate tasks where \texttt{RLDP} achieves higher normalized returns. These results suggest that overall \texttt{RLDP} fares competitively to the baselines. Complete results for this evaluation are provided in table~\ref{table:humenv_full}. Further analysis shows that the performance is task dependent - on some tasks (such as raisearms and lieonground), \texttt{RLDP} outperforms the baselines, even beating the oracle performance for some tasks (shown in table~\ref{table:humenv_full}). In others (like crawl or rotate tasks), all methods perform subpar to oracle.

\subsection{Learning Representations with Low Coverage Datasets}
\texttt{RLDP} learns a policy-independent representation through latent dynamics prediction. Prior approaches assume a class of policies to learn representations predictive of successor measures, and this strategy can lead to poor out-of-distribution generalization when actions proposed by the policy are not covered by the dataset. 

\textbf{Setup.} To evaluate this hypothesis concretely, we consider the  D4RL benchmark of OpenAI Gym MuJoCo tasks (\cite{fu2020d4rl}, \cite{todorov2012mujoco}, \cite{brockman2016openaigym}). This dataset has been widely used to examine the effects of value estimation error from out-of-distribution actions due to low coverage, which many offline RL algorithms struggle with (\cite{kostrikov2021offline, fujimoto2021minimalist, kumar2020conservative, wu2019behavior, sikchi2023dual}). We consider  halfcheetah, hopper, and walker2d domains, and medium and medium-expert datasets. 

\begin{wrapfigure}[]{r}{0.55\textwidth}
  \centering
  \includegraphics[width=0.55\textwidth]{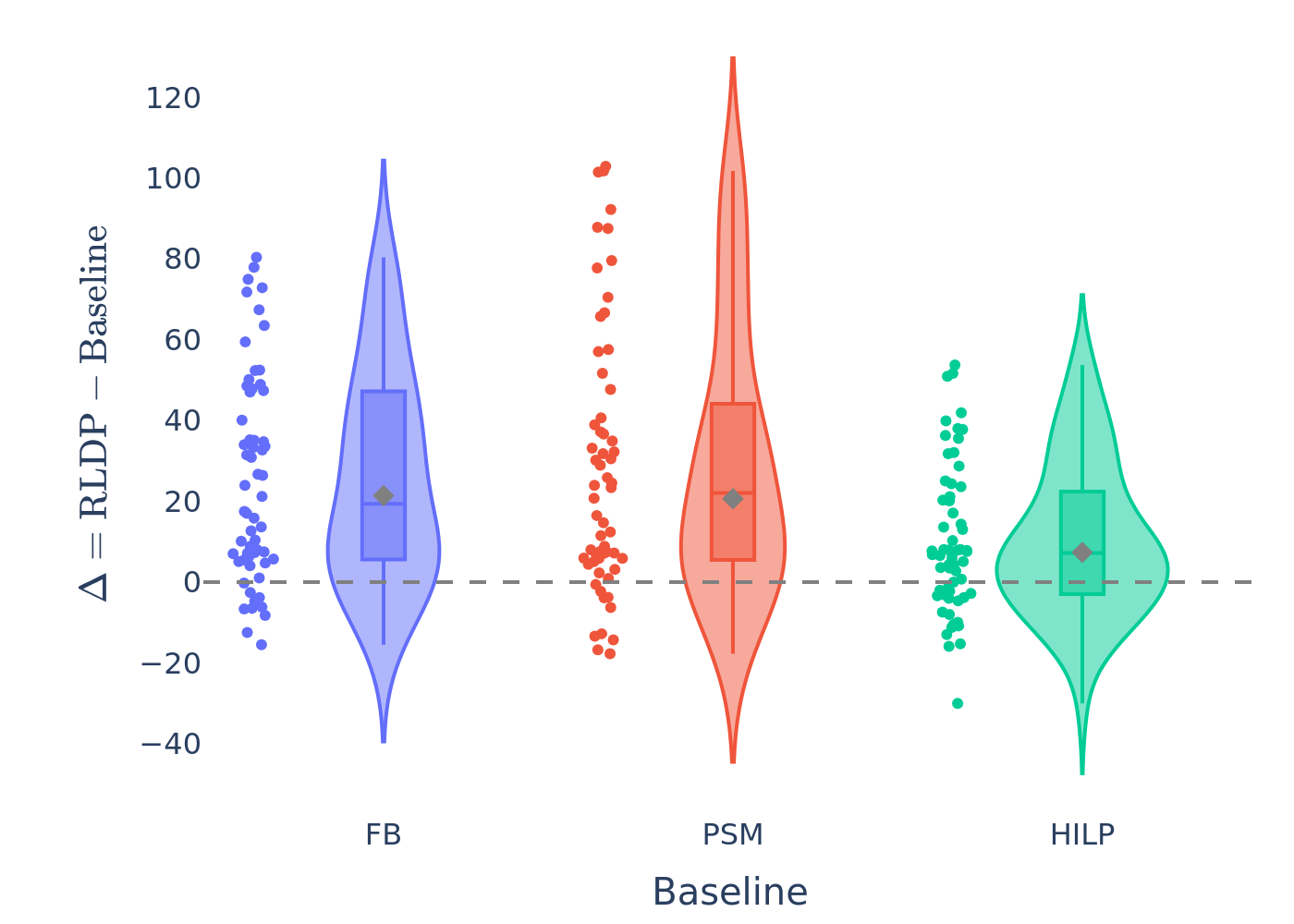}
  \caption{Pair-wise comparison of \texttt{RLDP} against baseline representation learning methods in low-coverage D4RL dataset. Each point represents $\Delta = R_{\text{\texttt{RLDP}}} - R_{\text{baseline}}$ for a single $\{\text{task}, \text{seed}\}$ pair. The gray diamond represents the IQM  (Interquartile Mean).}
  \label{fig:D4RL_violin}
\end{wrapfigure}

\textbf{Evaluation:} To evaluate the different zero-shot RL methods, we first pretrain the representation learning methods on these datasets for 1 million gradient steps. We use a modified zero-shot policy learning approach that alternates between equation~\ref{eq:evaluation_loss} and equation~\ref{eq:policy_improvement} that is additionally augmented policy improvement loss with a behavioral regularization inspired by~\cite{fujimoto2021minimalist}. This regularization allow the RL approach to learn without overestimation bias and enabling us to establish a fair comparison among representations learned by different approaches.  We use the corresponding reward function provided by each dataset to do reward inference and evaluate the zero-shot policy. Further details are provided in Appendix sections~\ref{sec:appendix_experiment_d4rl} and ~\ref{sec:appendix_implementation_lowcov}.

\textbf{Results:} Figure~\ref{fig:D4RL_violin} shows one-to-one comparison of normalized returns of \texttt{RLDP} against baseline methods using paired per-seed performance differences across 6 low-coverage D4RL tasks over 10 seeds. \texttt{RLDP} outperforms all baseline methods in 5 out of 6 tasks. The overall mass of the violin lies above zero and IQM is positive, indicating that \texttt{RLDP} achieves higher normalized returns compared to the baseline. Overall, the results suggest that \texttt{RLDP} is a reliable choice for feature learning in low coverage datasets while providing a simpler alternative to otherwise complex representation learning approaches. Per task normalized scores and statistical significance testing is reported in the Appendix section~\ref{sec:low_cov} and table~\ref{table:D4RLresults}. We further bisect the individual and combined impact of using Bellman backups (similar to FB which may query out-of-distribution actions), latent next-state prediction, and orthogonality regularization for representation learning in Appendix section~\ref{sec:loss_study} and find explicit Bellman backups to hurt performance of learned policy.

\subsection{What matters for supervising representations suitable for control?}

In section~\ref{sec:method}, we introduced \texttt{RLDP} method of representation learning with the loss used (equation~\ref{eq:rldp_full_loss}) and the encoder training process. In this section, we aim to ablate components of this loss and the architecture of the encoder.

\textbf{Orthogonality regularization:} Keeping the encoding horizon constant ($H = 5)$, we change the orthogonality regularization coefficient. The results, presented in figure~\ref{fig:ortho_ablations}, show that for zero regularization ($\lambda = 0$), the average return decreases compared to $\lambda>0$.  This shows that diversity regularization is critical to the representation loss. For fixed encoding horizon, we see that orthogonality regularizer $\lambda=1$ performs best. To further understand the role of the orthogonality regularizer in representation learning and how it helps prevent feature collapse, we refer to section~\ref{sec:method} and Appendix section~\ref{sec:appendix_regularization}, where we show that the learned representations increase in cosine similarity without regularization. 

\begin{wrapfigure}[19]{r}{0.59\textwidth}  
  \centering
  \includegraphics[width=\linewidth]{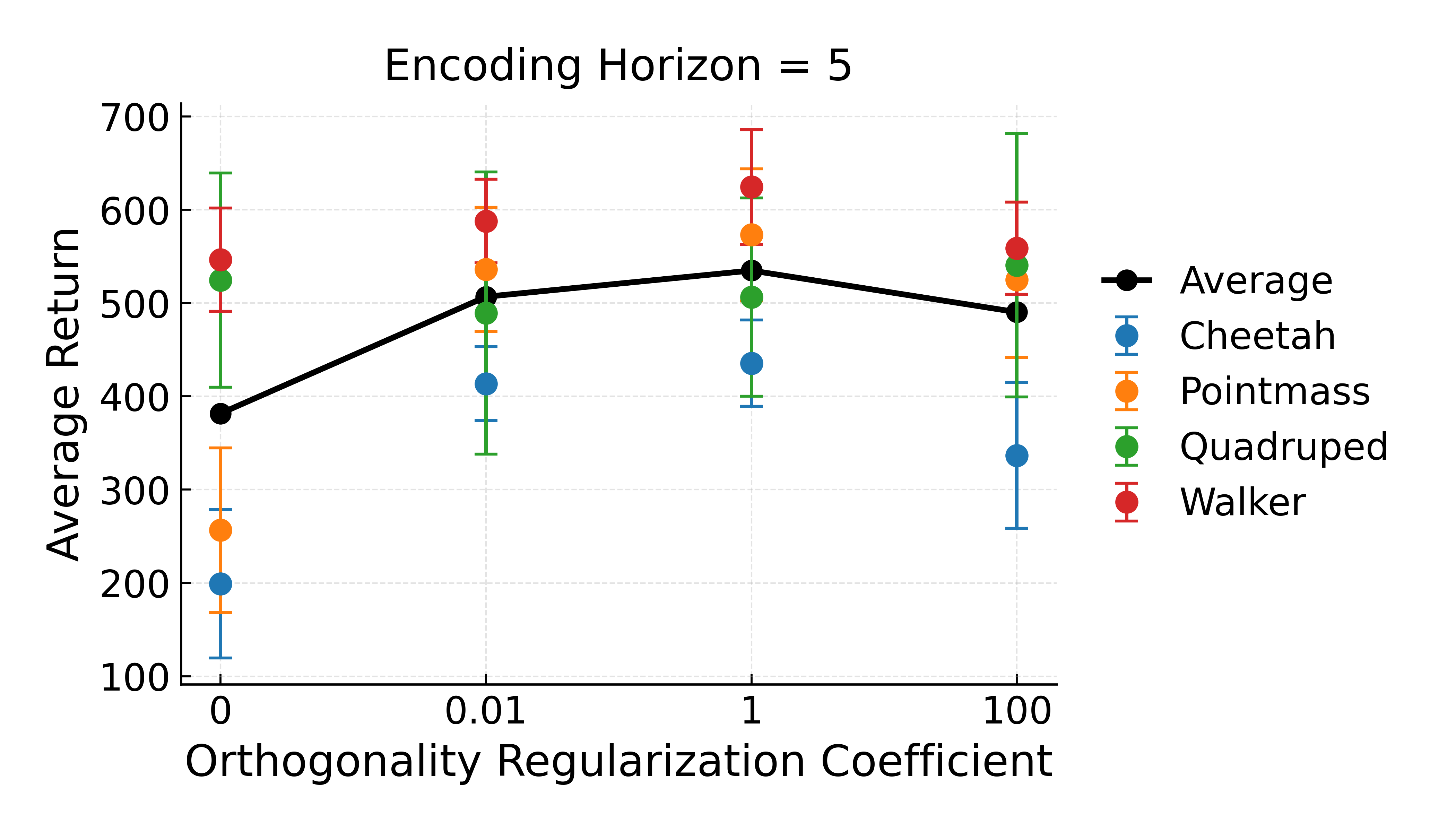}
  \caption{Evaluating the impact of Orthogonality Regularization: We ran one-sided Mann–Whitney U tests on the per-seed returns over 4 seeds to compare different values of the orthogonality regularization, and we observe that adding small orthogonality regularization coefficient $\lambda=0.01$ gives a statistically significant improvement over coefficient $\lambda=0.0$.}
  \label{fig:ortho_ablations}
\end{wrapfigure}

\textbf{Encoder architecture:} In section~\ref{sec:full_method}, we introduce encoder training, where we project the latent next state representation $g(\phi(s),a).w$ to a hypersphere. Here, we ablate the importance of this projection. The results are presented in table~\ref{tab:encoderarch-subset}. We observe that \texttt{RLDP} consistently outperforms its variant without spherical normalization on most tasks. The standard deviation is also higher for most results on the variant without hypersphere projection. This indicates that spherical normalization is an important design choice for stabilization and improving performance. 

Results for all environments are reported in table~\ref{tab:encoderarch}. We provide additional discussion of the complete encoder architecture in Section~\ref{sec:appendix_RLDP}, where we further ablate the encoder architecture.

\begin{wraptable}[14]{r}{0.52\linewidth} 
\vspace{-10pt}                         
\centering
\scriptsize
\setlength{\tabcolsep}{3pt}
\renewcommand{\arraystretch}{0.95}
\begin{tabular}{llcc}
\toprule
& \textbf{Task} & \texttt{RLDP} & \texttt{RLDP} w/o SN \\
\midrule
\multirow{5}{*}{\textbf{Quadruped}}
 & Stand          & 794.94$\pm$43.25       & 661.73$\pm$95.75 \\
 & Run            & 457.41$\pm$74.70           & 378.97$\pm$148.47 \\
 & Walk           & 465.40$\pm$185.29          & 519.39$\pm$251.11 \\
 & Jump           & 733.32$\pm$55.30  & 495.98$\pm$133.81 \\
 & \textbf{Average(*)} & 612.77$\pm$83.17                 & 514.02$\pm$86.94 \\
\midrule
\multirow{5}{*}{\textbf{Pointmass}}
 & Top Left       & 890.41$\pm$60.79           & 892.13$\pm$41.74 \\
 & Top Right      & 795.47$\pm$21.10           & 728.72$\pm$122.99 \\
 & Bottom Left    & 805.17$\pm$20.44           & 683.12$\pm$76.22 \\
 & Bottom Right   & 193.38$\pm$167.63          & 22.54$\pm$39.04 \\
 & \textbf{Average(*)} & 671.11$\pm$292.58              & 581.63$\pm$341.00 \\
 
\bottomrule
\end{tabular}

\caption{Study of encoder architecture (subset). Table shows mean $\pm$ std; \texttt{RLDP} significantly outperforms \texttt{RLDP} w/o SN on Pointmass, Quadruped, and pooled. SN: Spherical Normalization on $g$.}
\label{tab:encoderarch-subset}
\vspace{-2mm}
\end{wraptable}

\section{Conclusion}

This paper introduces \texttt{RLDP}, a representation learning objective for effective task generalization enabling performant behavioral foundation models. Our objective takes the simple form of regularized latent-dynamics prediction, an objective that does not require any reconstruction, making it able to handle high-dimensional observation space and does not require explicit Bellman backups, making it more amenable to optimization. We identify that simply using latent-dynamics prediction leads to a mild form of feature collapse where the state-representation similarity increases over time. To combat this issue, we propose using orthogonal regularization as a way to maintain feature diversity and prevent collapse. Using our method enables learning generalizable, stable, and robust representations that can achieve competitive performance compared to prior zero-shot RL techniques without relying on reinforcement-driven signals. Importantly,  we show that prior approaches struggle in low coverage setting and \texttt{RLDP} works robustly across different dataset types, making it a practical unsupervised learning approach. This work, thus, paves the way for simpler yet effective approaches to learn zero-shot policies in behavioral foundation models.

\newpage
\section{Acknowledgements}
We thank Siddarth Chandrasekar, Dikshant Shehmar, and Diego Gomez for enlightening discussions on unsupervised RL. This work has been conducted at the Reinforcement Learning and Artificial Intelligence (RLAI) lab at the University of Alberta, the Safe, Correct, and Aligned Learning and Robotics Lab (SCALAR) at the University of Massachusetts Amherst, and Machine Intelligence through Decision-making and Interaction (MIDI) Lab at The University of Texas at Austin. Support for this work was provided by the Canada CIFAR AI Chair Program, the Alberta Machine Intelligence Institute, and the Natural Sciences and Engineering Research Council of Canada (NSERC). 
HS, SA, and AZ are supported by NSF 2340651, NSF 2402650, DARPA HR00112490431, and ARO W911NF-24-1-0193. We are also grateful for the computational resources provided by the Digital Research Alliance of Canada.

\bibliography{iclr2026_conference}

@article{ma2022vip,
  title={Vip: Towards universal visual reward and representation via value-implicit pre-training},
  author={Ma, Yecheng Jason and Sodhani, Shagun and Jayaraman, Dinesh and Bastani, Osbert and Kumar, Vikash and Zhang, Amy},
  journal={arXiv preprint arXiv:2210.00030},
  year={2022}
}

@article{sikchi2023dual,
  title={Dual rl: Unification and new methods for reinforcement and imitation learning},
  author={Sikchi, Harshit and Zheng, Qinqing and Zhang, Amy and Niekum, Scott},
  journal={arXiv preprint arXiv:2302.08560},
  year={2023}
}

@article{dayan1993improving,
  title={Improving generalization for temporal difference learning: The successor representation},
  author={Dayan, Peter},
  journal={Neural computation},
  volume={5},
  number={4},
  pages={613--624},
  year={1993},
  publisher={MIT Press}
}

@article{eysenbach2021information,
  title={The information geometry of unsupervised reinforcement learning},
  author={Eysenbach, Benjamin and Salakhutdinov, Ruslan and Levine, Sergey},
  journal={arXiv preprint arXiv:2110.02719},
  year={2021}
}

@article{fb,
  title={Learning One Representation to Optimize All Rewards},
  author={Touati, Ahmed and Ollivier, Yann},
  journal={arXiv preprint arXiv:2103.07945},
  year={2021}
}

@article{grill2020bootstrap,
  title={Bootstrap your own latent: A new approach to self-supervised Learning},
  author={Grill, Jean-Bastien and Strub, Florian and Altch{\'e}, Florent and Tallec, Corentin and Richemond, Pierre H and Buchatskaya, Elena and Doersch, Carl and Avila Pires, Bernardo and Guo, Zhaohan Daniel and Gheshlaghi Azar, Mohammad and others},
  journal={eprint arXiv: 2006.07733},
  year={2020}
}

@article{burda2018exploration,
  title={Exploration by random network distillation},
  author={Burda, Yuri and Edwards, Harrison and Storkey, Amos and Klimov, Oleg},
  journal={arXiv preprint arXiv:1810.12894},
  year={2018}
}

@article{nair2022r3m,
  title={R3m: A universal visual representation for robot manipulation},
  author={Nair, Suraj and Rajeswaran, Aravind and Kumar, Vikash and Finn, Chelsea and Gupta, Abhinav},
  journal={arXiv preprint arXiv:2203.12601},
  year={2022}
}

@inproceedings{
agarwal2021behavior,
title={Behavior Predictive Representations for Generalization in Reinforcement Learning},
author={Siddhant Agarwal and Aaron Courville and Rishabh Agarwal},
booktitle={Deep RL Workshop Neural Information Processing Systems 2021},
url={https://openreview.net/forum?id=b5PJaxS6Jxg},
year={2021}
}

@article{barreto2018successor,
  title={Successor Features for Transfer in Reinforcement Learning},
  author={Barreto, Andr{\'e} and Munos, R{\'e}mi and Schaul, Tom and Silver, David},
  journal={arXiv preprint arXiv:1606.05312},
  year={2016}
}

@article{park2024hilbert,
  title={Foundation policies with hilbert representations},
  author={Park, Seohong and Kreiman, Tobias and Levine, Sergey},
  journal={arXiv preprint arXiv:2402.15567},
  year={2024}
}

@article{touati2023does,
  title={Does zero-shot reinforcement learning exist?},
  author={Touati, Ahmed and Rapin, J{\'e}r{\'e}my and Ollivier, Yann},
  journal={arXiv preprint arXiv:2209.14935},
  year={2022}
}

@article{grauman2022ego4d,
  title={Ego4D: around the world in 3,000 hours of egocentric video. arXiv 2021},
  author={Grauman, K and Westbury, A and Byrne, E and Chavis, Z and Furnari, A and Girdhar, R and Hamburger, J and Jiang, H and Liu, M and Liu, X and others},
  journal={arXiv preprint arXiv:2110.07058},
  year={2021}
}

@article{kostrikov2021offline,
  title={Offline reinforcement learning with implicit q-learning},
  author={Kostrikov, Ilya and Nair, Ashvin and Levine, Sergey},
  journal={arXiv preprint arXiv:2110.06169},
  year={2021}
}

@article{blier2021learning,
  title={Learning successor states and goal-dependent values: A mathematical viewpoint},
  author={Blier, L{\'e}onard and Tallec, Corentin and Ollivier, Yann},
  journal={arXiv preprint arXiv:2101.07123},
  year={2021}
}

@article{wu2018laplacian,
  title={The laplacian in rl: Learning representations with efficient approximations},
  author={Wu, Yifan and Tucker, George and Nachum, Ofir},
  journal={arXiv preprint arXiv:1810.04586},
  year={2018}
}

@article{exorl,
  title={Don't change the algorithm, change the data: Exploratory data for offline reinforcement learning},
  author={Yarats, Denis and Brandfonbrener, David and Liu, Hao and Laskin, Michael and Abbeel, Pieter and Lazaric, Alessandro and Pinto, Lerrel},
  journal={arXiv preprint arXiv:2201.13425},
  year={2022}
}

@article{Hafner2020Dream,
  title={Dream to control: Learning behaviors by latent imagination},
  author={Hafner, Danijar and Lillicrap, Timothy and Ba, Jimmy and Norouzi, Mohammad},
  journal={arXiv preprint arXiv:1912.01603},
  year={2019}
}

@article{kumar2021dr3,
  title={Dr3: Value-based deep reinforcement learning requires explicit regularization},
  author={Kumar, Aviral and Agarwal, Rishabh and Ma, Tengyu and Courville, Aaron and Tucker, George and Levine, Sergey},
  journal={arXiv preprint arXiv:2112.04716},
  year={2021}
}

@article{agarwal2024proto,
  title={Proto Successor Measure: Representing the space of all possible solutions of Reinforcement Learning},
  author={Agarwal, Siddhant and Sikchi, Harshit and Stone, Peter and Zhang, Amy},
  journal={arXiv preprint arXiv:2411.19418},
  year={2024}
}

@article{dm_control,
  title={Deepmind control suite},
  author={Tassa, Yuval and Doron, Yotam and Muldal, Alistair and Erez, Tom and Li, Yazhe and Casas, Diego de Las and Budden, David and Abdolmaleki, Abbas and Merel, Josh and Lefrancq, Andrew and others},
  journal={arXiv preprint arXiv:1801.00690},
  year={2018}
}

@article{fujimoto2025towards,
  title={Towards General-Purpose Model-Free Reinforcement Learning},
  author={Fujimoto, Scott and D'Oro, Pierluca and Zhang, Amy and Tian, Yuandong and Rabbat, Michael},
  journal={arXiv preprint arXiv:2501.16142},
  year={2025}
}

@book{puterman2014markov,
  title={Markov decision processes: discrete stochastic dynamic programming},
  author={Puterman, Martin L},
  year={2014},
  publisher={John Wiley \& Sons}
}

@article{schwarzer2020data,
  title={Data-efficient reinforcement learning with self-predictive representations},
  author={Schwarzer, Max and Anand, Ankesh and Goel, Rishab and Hjelm, R Devon and Courville, Aaron and Bachman, Philip},
  journal={arXiv preprint arXiv:2007.05929},
  year={2020}
}

@article{warde2018unsupervised,
  title={Unsupervised control through non-parametric discriminative rewards},
  author={Warde-Farley, David and Van de Wiele, Tom and Kulkarni, Tejas and Ionescu, Catalin and Hansen, Steven and Mnih, Volodymyr},
  journal={arXiv preprint arXiv:1811.11359},
  year={2018}
}

@article{eysenbach2018diversity,
  title={Diversity is all you need: Learning skills without a reward function},
  author={Eysenbach, Benjamin and Gupta, Abhishek and Ibarz, Julian and Levine, Sergey},
  journal={arXiv preprint arXiv:1802.06070},
  year={2018}
}

@article{achiam2018variational,
  title={Variational option discovery algorithms},
  author={Achiam, Joshua and Edwards, Harrison and Amodei, Dario and Abbeel, Pieter},
  journal={arXiv preprint arXiv:1807.10299},
  year={2018}
}

@article{park2023metra,
  title={Metra: Scalable unsupervised rl with metric-aware abstraction},
  author={Park, Seohong and Rybkin, Oleh and Levine, Sergey},
  journal={arXiv preprint arXiv:2310.08887},
  year={2023}
}

@inproceedings{pirotta2023fast,
  title={Fast imitation via behavior foundation models},
  author={Pirotta, Matteo and Tirinzoni, Andrea and Touati, Ahmed and Lazaric, Alessandro and Ollivier, Yann},
  booktitle={NeurIPS 2023 Foundation Models for Decision Making Workshop},
  year={2023}
}

@article{sikchi2025fast,
  title={Fast adaptation with behavioral foundation models},
  author={Sikchi, Harshit and Tirinzoni, Andrea and Touati, Ahmed and Xu, Yingchen and Kanervisto, Anssi and Niekum, Scott and Zhang, Amy and Lazaric, Alessandro and Pirotta, Matteo},
  journal={arXiv preprint arXiv:2504.07896},
  year={2025}
}

@article{peng2022ase,
  title={Ase: Large-scale reusable adversarial skill embeddings for physically simulated characters},
  author={Peng, Xue Bin and Guo, Yunrong and Halper, Lina and Levine, Sergey and Fidler, Sanja},
  journal={ACM Transactions On Graphics (TOG)},
  volume={41},
  number={4},
  pages={1--17},
  year={2022},
  publisher={ACM New York, NY, USA}
}

@article{won2022physics,
  title={Physics-based character controllers using conditional vaes},
  author={Won, Jungdam and Gopinath, Deepak and Hodgins, Jessica},
  journal={ACM Transactions on Graphics (TOG)},
  volume={41},
  number={4},
  pages={1--12},
  year={2022},
  publisher={ACM New York, NY, USA}
}

@article{luo2023universal,
  title={Universal humanoid motion representations for physics-based control},
  author={Luo, Zhengyi and Cao, Jinkun and Merel, Josh and Winkler, Alexander and Huang, Jing and Kitani, Kris and Xu, Weipeng},
  journal={arXiv preprint arXiv:2310.04582},
  year={2023}
}

@article{tirinzoni2025zero,
  title={Zero-shot whole-body humanoid control via behavioral foundation models},
  author={Tirinzoni, Andrea and Touati, Ahmed and Farebrother, Jesse and Guzek, Mateusz and Kanervisto, Anssi and Xu, Yingchen and Lazaric, Alessandro and Pirotta, Matteo},
  journal={arXiv preprint arXiv:2504.11054},
  year={2025}
}

@article{he2024preventingdimensionalcollapseselfsupervised,
  title={Preventing Dimensional Collapse in Self-Supervised Learning via Orthogonality Regularization},
  author={He, Junlin and Du, Jinxiao and Ma, Wei},
  journal={arXiv preprint arXiv:2411.00392},
  year={2024}
}

@article{bansal2018canwegain,
  title={Can We Gain More from Orthogonality Regularizations in Training Deep CNNs?},
  author={Bansal, Nitin and Chen, Xiaohan and Wang, Zhangyang},
  journal={arXiv preprint arXiv:1810.09102},
  year={2018}
}

@inproceedings{thrun2014issues,
  title={Issues in using function approximation for reinforcement learning},
  author={Thrun, Sebastian and Schwartz, Anton},
  booktitle={Proceedings of the 1993 connectionist models summer school},
  pages={255--263},
  year={2014},
  organization={Psychology Press}
}

@article{hansen2022temporal,
  title={Temporal difference learning for model predictive control},
  author={Hansen, Nicklas and Wang, Xiaolong and Su, Hao},
  journal={arXiv preprint arXiv:2203.04955},
  year={2022}
}

@article{fujimoto2019off,
  title={Off-policy deep reinforcement learning without exploration. corr abs/1812.02900 (2018)},
  author={Fujimoto, Scott and Meger, David and Precup, Doina},
  journal={arXiv preprint arXiv:1812.02900},
  year={2018}
}

@article{lu2018non,
  title={Non-delusional Q-learning and value-iteration},
  author={Lu, Tyler and Schuurmans, Dale and Boutilier, Craig},
  journal={Advances in neural information processing systems},
  volume={31},
  year={2018}
}

@article{fu2019diagnosing,
  title={Diagnosing Bottlenecks in Deep Q-learning Algorithms},
  author={Fu, Justin and Kumar, Aviral and Soh, Matthew and Levine, Sergey},
  journal={arXiv preprint arXiv:1902.10250},
  year={2019}
}

@article{fu2020d4rl,
  title={D4rl: Datasets for deep data-driven reinforcement learning},
  author={Fu, Justin and Kumar, Aviral and Nachum, Ofir and Tucker, George and Levine, Sergey},
  journal={arXiv preprint arXiv:2004.07219},
  year={2020}
}

@article{haarnoja2018soft,
  title={Soft Actor-Critic: Off-Policy Maximum Entropy Deep Reinforcement Learning with a Stochastic Actor},
  author={Haarnoja, Tuomas and Zhou, Aurick and Abbeel, Pieter and Levine, Sergey},
  journal={arXiv preprint arXiv:1801.01290},
  year={2018}
}

@article{fujimoto2021minimalist,
  title={A Minimalist Approach to Offline Reinforcement Learning}, 
    author={Scott Fujimoto and Shixiang Shane Gu},
  journal={arXiv preprint arXiv:2106.06860},
  year={2021}
}

@inproceedings{todorov2012mujoco,
  title={Mujoco: A physics engine for model-based control},
  author={Todorov, Emanuel and Erez, Tom and Tassa, Yuval},
  booktitle={2012 IEEE/RSJ international conference on intelligent robots and systems},
  pages={5026--5033},
  year={2012},
  organization={IEEE}
}

@article{brockman2016openaigym,
  title={OpenAI Gym}, 
  author={Greg Brockman and Vicki Cheung and Ludwig Pettersson and Jonas Schneider and John Schulman and Jie Tang and Wojciech Zaremba},
  journal={arXiv preprint arXiv:1606.01540},
  year={2016},
}

@article{loper2023smpl,
  title={SMPL: A Skinned Multi-Person Linear Model},
  author={Loper, Matthew and Mahmood, Naureen and Romero, Javier and Pons-Moll, Gerard and Black, Michael J},
  journal={ACM Transactions on Graphics},
  volume={34},
  number={6},
  year={2015},
  publisher={Association for Computing Machinery}
}

@article{kumar2020conservative,
  title={Conservative Q-Learning for Offline Reinforcement Learning},
  author={Kumar, Aviral and Zhou, Aurick and Tucker, George and Levine, Sergey},
  journal={arXiv preprint arXiv:2006.04779},
  year={2020}
}

@article{wu2019behavior,
  title={Behavior regularized offline reinforcement learning},
  author={Wu, Yifan and Tucker, George and Nachum, Ofir},
  journal={arXiv preprint arXiv:1911.11361},
  year={2019}
}

@article{sikchi2024rl,
  title={Rlzero: Direct policy inference from language without in-domain supervision},
  author={Sikchi, Harshit and Agarwal, Siddhant and Jajoo, Pranaya and Parajuli, Samyak and Chuck, Caleb and Rudolph, Max and Stone, Peter and Zhang, Amy and Niekum, Scott},
  journal={arXiv preprint arXiv:2412.05718},
  year={2024}
}

@article{li2025bfm,
  title={BFM-Zero: A Promptable Behavioral Foundation Model for Humanoid Control Using Unsupervised Reinforcement Learning},
  author={Li, Yitang and Luo, Zhengyi and Zhang, Tonghe and Dai, Cunxi and Kanervisto, Anssi and Tirinzoni, Andrea and Weng, Haoyang and Kitani, Kris and Guzek, Mateusz and Touati, Ahmed and others},
  journal={arXiv preprint arXiv:2511.04131},
  year={2025}
}

@article{tessler2025maskedmanipulator,
  title={MaskedManipulator: Versatile Whole-Body Control for Loco-Manipulation},
  author={Tessler, Chen and Jiang, Yifeng and Coumans, Erwin and Luo, Zhengyi and Chechik, Gal and Peng, Xue Bin},
  journal={arXiv preprint arXiv:2505.19086},
  year={2025}
}
\bibliographystyle{iclr2026_conference}

\appendix
\newpage
\section{Appendix}
\subsection{Proof of Lemma~\ref{lem:bound} and multi-step control extension}
\label{app:bound_proof}

We consider the discounted MDP from Section~\ref{sec:preliminaries} with initial-state distribution $\nu$ and a downstream reward satisfying $|r(s)|\leq R_{\max}$. Let $\mu$ be the reference state distribution used by the orthogonality loss and recall
\begin{equation}\label{eq:appendix_frobenius_loss}
\Sigma_\phi:=\mathbb{E}_{s\sim\mu}[\phi(s)\phi(s)^\top],
\qquad
\mathcal{L}_r=\|\Sigma_\phi-I_d\|_F^2.
\end{equation}
For any function $q$, write $\|q\|_{L^2(\mu)}=(\mathbb{E}_{s\sim\mu}|q(s)|^2)^{1/2}$.

For the one-step result, let $\rho_{SA}$ be the offline state-action distribution and let $\rho(ds,da,ds')=\rho_{SA}(ds,da)P(ds'\mid s,a)$ be its induced transition distribution. Write $f(\phi(s),a)=g(\phi(s),a)^\top w$ and define
\begin{equation}\label{eq:one_step_rldp_loss}
\mathcal{L}_{d,1}:=\mathbb{E}_{\rho}\|f(\phi(s),a)-\phi(s')\|_2^2,
\qquad
\mathcal{L}_{RLDP,1}:=\mathcal{L}_{d,1}+\lambda\mathcal{L}_r.
\end{equation}
The statement is evaluated at an encoder synchronization, so $\bar\phi=\phi$.

Assume $r(s)=u^\top\phi(s)$ and that the learned action values have the form
\[
\widehat Q(s,a)=w_a^\top\phi(s),
\qquad
\widehat V(s)=\max_{a\in\mathcal A}\widehat Q(s,a),
\]
with
\begin{equation}\label{eq:readout_norm_assumptions}
\|u^\top\phi\|_{L^2(\mu)}\leq R_{\max},
\qquad
\max_a\|w_a^\top\phi\|_{L^2(\mu)}\leq\frac{R_{\max}}{1-\gamma}.
\end{equation}
We assume the latent Bellman equation holds $\rho_{SA}$-almost everywhere:
\begin{equation}\label{eq:latent_bellman}
w_a^\top\phi(s)
=u^\top f(\phi(s),a)
+\gamma\max_b w_b^\top f(\phi(s),a).
\end{equation}
The learned policy is $\varepsilon_{\mathrm{act}}$-greedy with respect to $\widehat Q$, meaning
\begin{equation}\label{eq:actor_greedification}
0\leq\widehat V(s)-\widehat Q(s,a)\leq\varepsilon_{\mathrm{act}}
\quad\text{whenever }\widehat\pi(a\mid s)>0.
\end{equation}

For a policy $\pi$, define its normalized discounted state-action occupancy by
\[
d_\nu^\pi(ds,da)
:=(1-\gamma)\sum_{t\geq0}\gamma^t
\mathbb{P}_\nu^\pi(s_t\in ds,a_t\in da).
\]
Assume $d_\nu^{\pi^\star},d_\nu^{\widehat\pi}\ll\rho_{SA}$. With $q_\pi:=d d_\nu^\pi/d\rho_{SA}$, the coverage term in the main statement is
\begin{equation}\label{eq:occupancy_mismatch}
D_\rho(\pi^\star,\widehat\pi)
:=\|q_{\pi^\star}-q_{\widehat\pi}\|_{L^2(\rho_{SA})}<\infty.
\end{equation}

\begin{proof}
Set $\eta:=\|\Sigma_\phi-I_d\|_F=\sqrt{\mathcal L_r}$. Because the operator norm is bounded by the Frobenius norm, every eigenvalue of $\Sigma_\phi$ lies in $[1-\eta,1+\eta]$. Hence, whenever $\eta<1$,
\begin{equation}\label{eq:readout_conditioning}
\|v\|_2
\leq
\frac{\|v^\top\phi\|_{L^2(\mu)}}{\sqrt{1-\eta}}
\qquad\text{for every }v\in\mathbb{R}^d.
\end{equation}
Since $\mathcal L_{RLDP,1}<\lambda$,
\begin{equation}\label{eq:eta_control}
\eta=\sqrt{\mathcal L_r}
\leq\sqrt{\mathcal L_{RLDP,1}/\lambda}<1.
\end{equation}

Define the conditional Bellman residual
\[
\Delta(s,a)
:=\mathbb{E}_{s'\sim P(\cdot\mid s,a)}
[r(s')+\gamma\widehat V(s')]-\widehat Q(s,a).
\]
Discounted telescoping, $\widehat Q-\widehat V\leq0$, and the $\varepsilon_{\mathrm{act}}$-greediness of $\widehat\pi$ yield
\begin{align}
J(\pi^\star)-J(\widehat\pi)
&\leq\frac{
\mathbb{E}_{d_\nu^{\pi^\star}}[\Delta]
-\mathbb{E}_{d_\nu^{\widehat\pi}}[\Delta]
+\varepsilon_{\mathrm{act}}}{1-\gamma}\notag\\
&\leq\frac{D_\rho(\pi^\star,\widehat\pi)
\|\Delta\|_{L^2(\rho_{SA})}+\varepsilon_{\mathrm{act}}}{1-\gamma}.
\label{eq:signed_residual}
\end{align}

Let $e=f(\phi(s),a)-\phi(s')$. By Equation~\ref{eq:latent_bellman} and the Lipschitz property of a maximum of linear functions,
\[
|\Delta(s,a)|
\leq
\left(\|u\|_2+\gamma\max_b\|w_b\|_2\right)
\mathbb{E}[\|e\|_2\mid s,a].
\]
Using Jensen's inequality, Equations~\ref{eq:readout_norm_assumptions} and~\ref{eq:readout_conditioning}, and the identity $1+\gamma/(1-\gamma)=1/(1-\gamma)$ gives
\begin{equation}\label{eq:bellman_residual_bound}
\|\Delta\|_{L^2(\rho_{SA})}
\leq
\frac{R_{\max}}{(1-\gamma)\sqrt{1-\eta}}
\sqrt{\mathcal L_{d,1}}.
\end{equation}
Substituting Equation~\ref{eq:bellman_residual_bound} into Equation~\ref{eq:signed_residual}, then using $\mathcal L_{d,1}\leq\mathcal L_{RLDP,1}$ and Equation~\ref{eq:eta_control}, proves Equation~\ref{eq:one_step_control_bound}.
\end{proof}

\paragraph{Multi-step control extension.}
For completeness, we give the analogous result for the unweighted multi-step loss used in Equation~\ref{eq:latent_dynamics_prediction}. Let $\rho_H=d^O$ be the offline distribution of trajectories $\tau=(s_0,a_0,\ldots,a_{H-1},s_H)$ and define
\begin{equation}\label{eq:multi_step_rldp_loss}
\mathcal L_{d,H}
:=\mathbb{E}_{\tau\sim\rho_H}\left[\sum_{t=1}^H\|h_t(\tau)-\phi(s_t)\|_2^2\right],
\qquad
\mathcal L_{RLDP,H}:=\mathcal L_{d,H}+\lambda\mathcal L_r.
\end{equation}
For a policy $\pi$, let $\mathbb P_\pi^H$ be its length-$H$ trajectory distribution and assume
\begin{equation}\label{eq:horizon_coverage_coefficient}
\mathbb P_\pi^H\ll\rho_H,
\qquad
C_{\pi,H}:=\left\|\frac{d\mathbb P_\pi^H}{d\rho_H}\right\|_\infty<\infty.
\end{equation}
Write $A_H(\gamma):=\sum_{t=1}^H\gamma^{2(t-1)}$ and define the coupled model score
\begin{equation}\label{eq:model_score}
\widehat J_H(\pi)
:=\mathbb{E}_{\tau\sim\mathbb P_\pi^H}
\left[\sum_{t=1}^H\gamma^{t-1}u^\top h_t(\tau)\right].
\end{equation}
If $\mathcal L_{RLDP,H}<\lambda$, the same conditioning argument and a change of measure give
\begin{equation}\label{eq:multi_step_evaluation_bound}
|J(\pi)-\widehat J_H(\pi)|
\leq
R_{\max}\sqrt{
\frac{C_{\pi,H}A_H(\gamma)\mathcal L_{RLDP,H}}
{1-\sqrt{\mathcal L_{RLDP,H}/\lambda}}}
+\frac{R_{\max}\gamma^H}{1-\gamma}.
\end{equation}
Indeed, the first term follows from Cauchy--Schwarz applied to the discounted sequence of latent errors; the second bounds the reward tail beyond horizon $H$.

For a candidate policy class $\mathcal P$, suppose
\[
\widehat J_H(\widehat\pi)
\geq\sup_{\pi\in\mathcal P}\widehat J_H(\pi)-\varepsilon_{\mathrm{plan}},
\qquad
\pi_{\mathcal P}^\star\in\arg\max_{\pi\in\mathcal P}J(\pi).
\]
Decomposing the control gap into the evaluation errors of $\pi_{\mathcal P}^\star$ and $\widehat\pi$, plus the planning error, yields
\begin{equation}\label{eq:multi_step_control_bound}
\boxed{
\begin{aligned}
J(\pi_{\mathcal P}^\star)-J(\widehat\pi)
\leq{}&
R_{\max}\left(\sqrt{C_{\pi_{\mathcal P}^\star,H}}
+\sqrt{C_{\widehat\pi,H}}\right)
\sqrt{\frac{A_H(\gamma)\mathcal L_{RLDP,H}}
{1-\sqrt{\mathcal L_{RLDP,H}/\lambda}}}
\\
&+\frac{2R_{\max}\gamma^H}{1-\gamma}
+\varepsilon_{\mathrm{plan}}.
\end{aligned}}
\end{equation}
If $\mathcal P$ contains a globally optimal policy, $\pi_{\mathcal P}^\star$ may be replaced by $\pi^\star$.

\subsection{Prior Approaches for representation learning in BFM's}
\label{ap:prior_approaches}
Prior work has often relied on complex objectives to enable learning of $\phi$ and $\psi$ for BFMs. Forward-Backward (FB)~\citep{touati2023does} combine learning the state representation, $\phi$ with successor features, $\psi$ and the policy. $\phi$ and $\psi$ are jointly learned to represent successor measures for a class of reward-optimal policies. FB alternates minimizing the successor measure loss below jointly for $\psi,\phi$ alongside policy improvement by optimizing Eq~\ref{eq:policy_improvement}. FB uses the following loss minimizing Bellman residuals to learn representations:
\begin{multline}
\label{eq:fb_loss}
\mathcal{L}(\phi, \psi) = -\mathbb{E}_{s, a, s' \sim \rho}[\psi(s, a, z)^T \phi(s')] 
    \\+ \frac{1}{2} \mathbb{E}_{s, a, s' \sim \rho
    ,s^+ \sim \rho}[(\psi(s, a, z)^T \phi(s^+) - \gamma\bar \psi(s', \pi_z(s'), z)^T \bar \phi(s^+))^2]
\end{multline}

HILP~\citep{park2024hilbert} learns state representation $\phi$ that are suitable to predict value function for goal-reaching which is subsequently used for zero-shot RL in the same way as \texttt{RLDP}. HILP parameterizes the value function to be $V(s,g)=\|\bar\phi(s) - \bar\phi(g) \|$ and then minimizes:
\begin{equation}
\label{eq:hilp_loss}
\mathcal{L}(\phi) = \mathbb{E}_{s, s', g \sim \rho}[\ell_\tau^2 (-\mathbbm{1}(s \neq g) - \gamma V(s',g)+ V(s,g) ]
\end{equation}
where $\ell_\tau^2$ is an expectile loss (\cite{kostrikov2021offline}).

PSM~\citep{agarwal2024proto} learns state representation to represent the successor measures for a class of policies defined with a discrete codebook. The loss used is as follows - 
\begin{multline}
\label{eq:psm_loss}
\mathcal{L}(\phi, \psi, w) = -\mathbb{E}_{s, a, s' \sim \rho}[\psi(s, a) w(c) \phi(s')] 
    \\+ \frac{1}{2} \mathbb{E}_{s, a, s' \sim \rho
    ,s^+ \sim \rho}[(\psi(s, a) w(c) \phi(s^+) - \gamma\bar\psi(s', \pi_c(s')) \bar w(c) \bar \phi(s^+))^2]
\end{multline}
where $c$ is a discrete code defining a policy and $w$ maps the discrete code to a continuous space. The above loss is minimized averaged over a pre-determined distribution of discrete codes.

The Laplacian approach~\citep{wu2018laplacian} is action-independent and learns state representation using eigenvectors of graph-Laplacian induced by a random-walk operator. The representation objective for Laplacian approach takes the following form: 
\begin{equation}
\label{eq:laplacian_loss}
\mathcal{L}(\phi)
= \frac{1}{2}\,\mathbb{E}_{s \sim \rho,\; s' \sim P_\pi(\cdot \mid s)}
\left[ \| \phi(s) - \phi(s') \|_2^2 \right]
+ \beta \mathbb{E}_{s, s' \in \rho} [\phi(s)^T \phi(s')]
\end{equation}

\subsection{Experimental details}
\label{sec:appendix_experiment}
\subsubsection{ExoRL}
\label{sec:appendix_experiment_exorl}
ExoRL (Exploratory Offline Reinforcement Learning) is a benchmark suite that provides large, diverse offline datasets generated by exploratory policies across multiple domains (e.g., locomotion, manipulation, navigation).
We consider three locomotion and one goal-based navigation environments -- Walker, Quadruped, Cheetah, Pointmass -- from the Deepmind Control Suite (\cite{dm_control}). For offline training, we use data provided from the EXORL benchmark trained using RND agent. These domains are explained further in table~\ref{Exorl}. All DM control tasks have an episode length of 1000.
\begin{table}[ht]
\centering
\small
\begin{tabular}{lllllll}
\hline
\textbf{Domain}     & \textbf{Description}          & \textbf{Type}  & \makecell[l]{\textbf{Observation/Action}\\\textbf{Dimension}}   & \textbf{Tasks}    & \textbf{Reward} \\
\hline
Walker              & two-legged robot              & Locomotion            & 24/6                & \makecell[l]{stand\\walk\\run\\flip}                             & Dense \\
\midrule
Quadruped           & four-legged robot             & Locomotion            & 78/12                & \makecell[l]{jump\\walk\\run\\stand}                              & Dense \\
\midrule
Cheetah             & planar, 2D robot              & Locomotion         & 17/6                  & \makecell[l]{walk\\run\\walk backward\\run backward}                          & Sparse \\
\midrule
Pointmass           & navigation in 2D plane        & Goal-reaching         & 4/2                    & \makecell[l]{reach top left\\reach top right\\reach bottom right\\reach bottom left}                         & Sparse \\
\hline
\end{tabular}
\caption{\textbf{ExoRL dataset summary}. \textit{Domain} is the environment name in the ExoRL benchmark. \textit{Description} is a natural language description of the agent embodiment and environment. \textit{Type} refers to the broad task category. \textit{Observation/Action Dimension} refers to the size of observation and action vectors from the environment. \textit{Tasks} refers to the suite of evaluation tasks provided in the ExoRL benchmark. \textit{Reward} refers to the density of non-zero reward signals from the environment.}
\label{Exorl}
\end{table}

\subsubsection{SMPL 3D Humanoid}
\label{sec:appendix_experiment_humenv}
\begin{wraptable}{r}{0.6 \textwidth} 
\centering
\begin{tabular}{lll}
\toprule
\textbf{Domain} & \textbf{Task Name} & \textbf{\# Samples} \\
\midrule
\multirow{6}{*}{Gym-MuJoCo} 
 & hopper-medium               & $10^6$ \\
 & hopper-medium-expert        & $2 \times 10^6$ \\
 & halfcheetah-medium          & $10^6$ \\
 & halfcheetah-medium-expert   & $2 \times 10^6$ \\
 & walker2d-medium             & $10^6$ \\
 & walker2d-medium-expert      & $2 \times 10^6$ \\
\bottomrule
\end{tabular}
\caption{Gym-MuJoCo tasks from D4RL.}
\label{d4rl}
\end{wraptable}
SMPL (Skinned Multi-Person Linear Model) is a 3D parametric model of the human body that is widely used for character animation. It has a 358 dimensional proprioceptive observation space that includes body pose, rotation, and velocities. The action space is 69 dimensional where each action dimension lies in [-1,1]. All episodes are of length 300.

\subsubsection{D4RL}
\label{sec:appendix_experiment_d4rl}
D4RL (Datasets for Deep Data-Driven Reinforcement Learning) (\cite{fu2020d4rl}) is an offline RL benchmark suite built on the v2 Open AI Gym (\cite{brockman2016openaigym}) that provides standardized datasets and evaluation protocols across simulated and real-world tasks. We consider three simulated locomotion tasks -- Hopper, HalfCheetah, Walker2D -- and two datasets -- medium and medium-expert. As described in \cite{fu2020d4rl}, the medium dataset is generated by online training a Soft-Actor Critic (\cite{haarnoja2018soft}) agent, early-stopping the training, and collecting 1 million samples from this partially-trained policy. The “medium-expert” dataset is generated by mixing equal amounts of expert demonstrations and suboptimal data, generated via a partially trained policy or by unrolling a uniform-at-random policy. Further details about these tasks have been provided in table~\ref{d4rl}. Episodes have inconsistent length depending on termination/truncation with a maximum of 1000.

\subsection{Regularized Latent Dynamics Prediction}
\label{sec:appendix_RLDP}
\begin{wrapfigure}[18]{r}{0.49\textwidth}  
  \centering
  \includegraphics[width=0.48\textwidth]{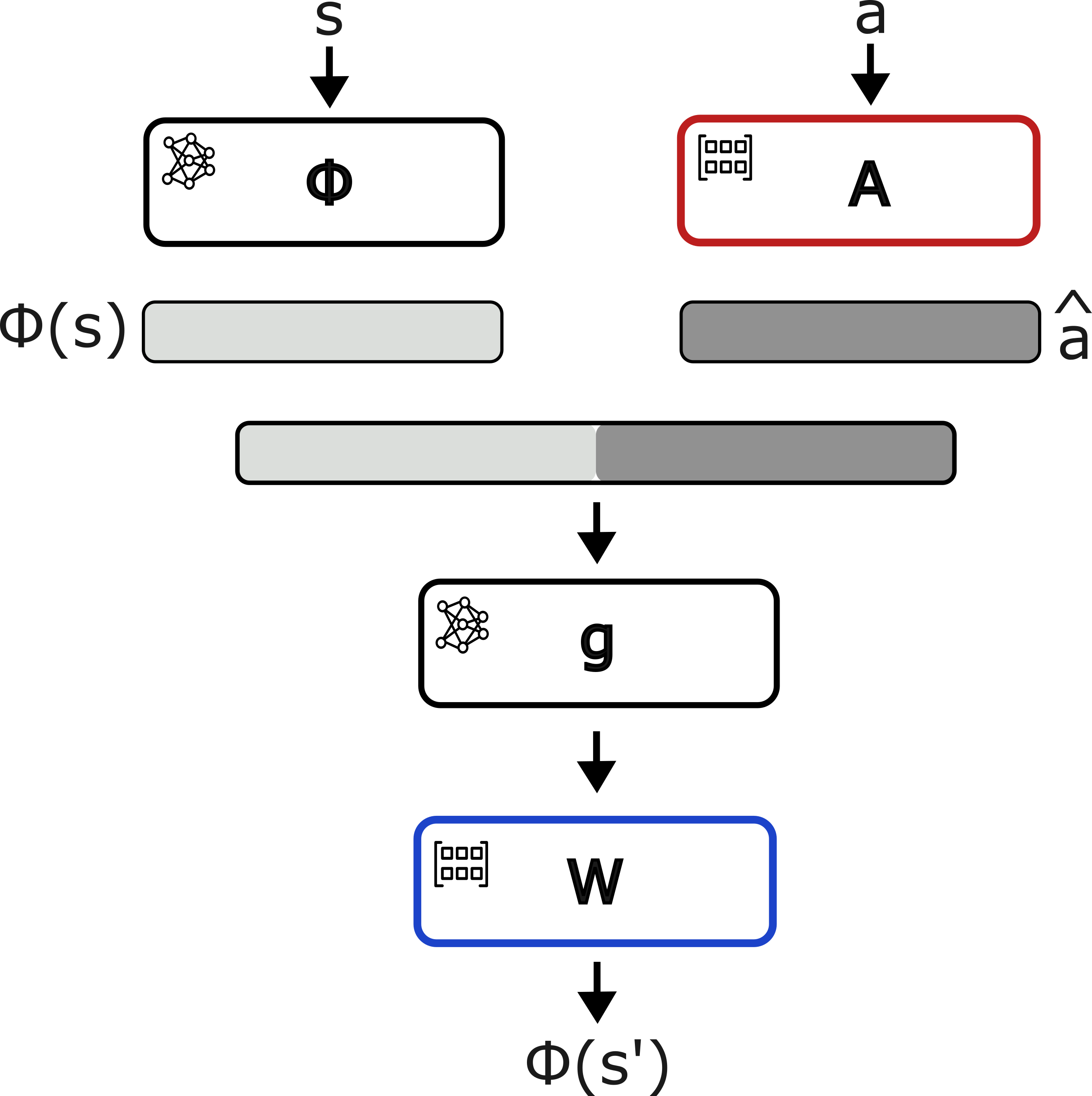}
  \caption{Architecture of latent next-state prediction network in \texttt{RLDP}.}
  \label{fig:rldp_arch}
\end{wrapfigure}
\texttt{RLDP} aims to learn a state representation encoder $\phi$ such that latent state dynamics can be expressed as $\phi(s')=g(\phi(s),a)^\top \textbf{w}$ where $g$ is a latent-state action encoder and w are some constant weights.
\subsubsection{Architecture}
The architecture of the \texttt{RLDP} latent next-state prediction network is as pictured in figure~\ref{fig:rldp_arch}
\label{sec:appendix_RLDP_architecture}

The state representation network $\phi$ is a feedforward MLP with two hidden layers of 256 units that maps a state $s$ to a $d$-dimensional embedding. 
In our default \texttt{RLDP} architecture, the action $a$ is mapped to 256-dimensional space using linear network $A$. In this section, we make this distinction clear and use $a$ to denote raw action input to the network and $A$ to denote a projection of action as input to network.
The outputs of these two networks are concatenated and passed through a feedforward neural network $g$ that has two hidden layers of 512 units and a $d$-dimensional output. The output of the $g$ network is passed through a linear layer $w$. The final $d$-dimensional representations are spherically normalized.

\textit{During encoder training}, the encoder map is unrolled to perform next latent state prediction from current latent state and action as $\phi(s') = g(\phi(s),A)^\top w$. \textit{After encoder training}, the encoder network is frozen. To obtain latent state embeddings, the states are passed through the state representation network to get $\phi(s)$.The encoder architecture for \texttt{RLDP} is kept consistent across all methods and datasets.

\subsubsection{Ablations}
In this section, we aim to examine the components of the \texttt{RLDP} state encoder to understand which parts of the method are crucial to learn representations that can maximize the span of reward functions we can represent optimal policies for.

We pretrain state representation network $\phi$ and policy using the ExoRL dataset generated with RND exploration policy and evaluate the performance in DMC environments cheetah, pointmass, quadruped, and walker.

\textbf{Does orthogonality regularization matter?}

\label{sec:appendix_RLDP_orthogonality}
Figure~\ref{fig:ortho_ablations_2} shows the impact of changing orthogonality regularization while keeping a constant encoding horizon ($H=5$). The figure shows how the cosine similarity between latent states changes during encoder training for different regularization coefficients.

For a regularization coefficient $\lambda=0$, the cosine similarity increases, indicating that all states are getting mapped to similar representations. For any regularization coefficient $\lambda > 0$, we observe that the cosine similarity follows a steep descent, indicating that the states are being mapped to diverse representations. These results indicate that adding even small orthogonality regularization can reduce representation collapse significantly.

\label{sec:appendix_regularization}
\begin{figure}[ht]
  \centering
  \includegraphics[width=0.48\textwidth]{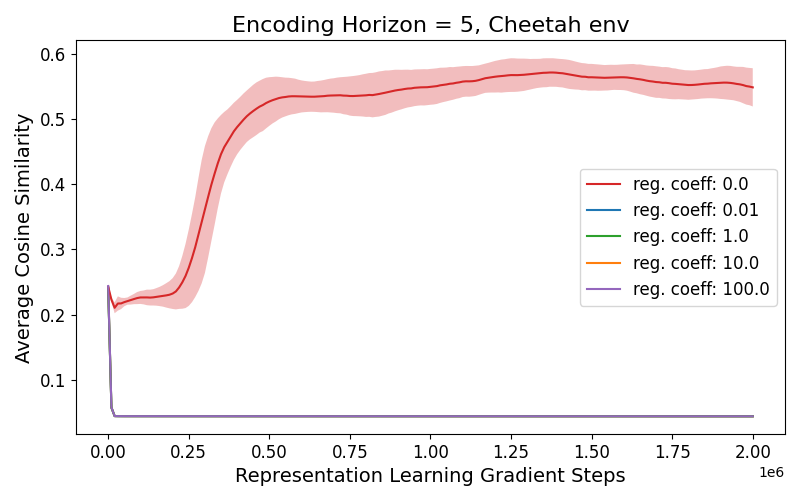}
  \hfill
  \includegraphics[width=0.48\textwidth]{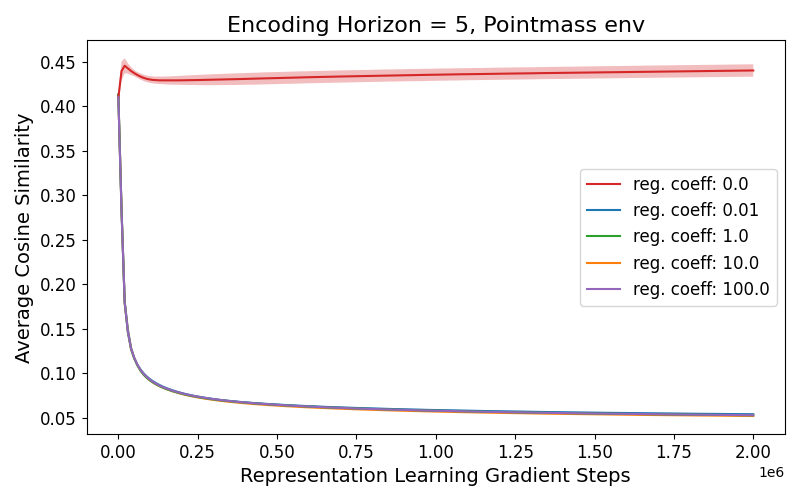}

  \vspace{1ex}  

  \includegraphics[width=0.48\textwidth]{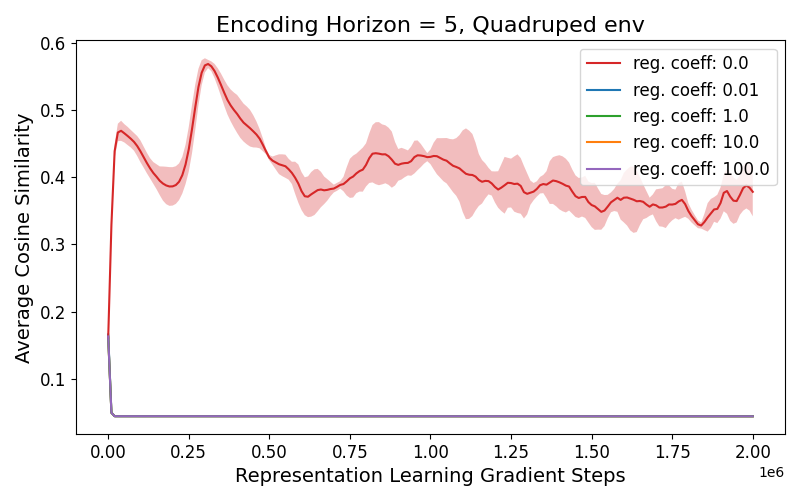}%
  \hfill
  \includegraphics[width=0.48\textwidth]{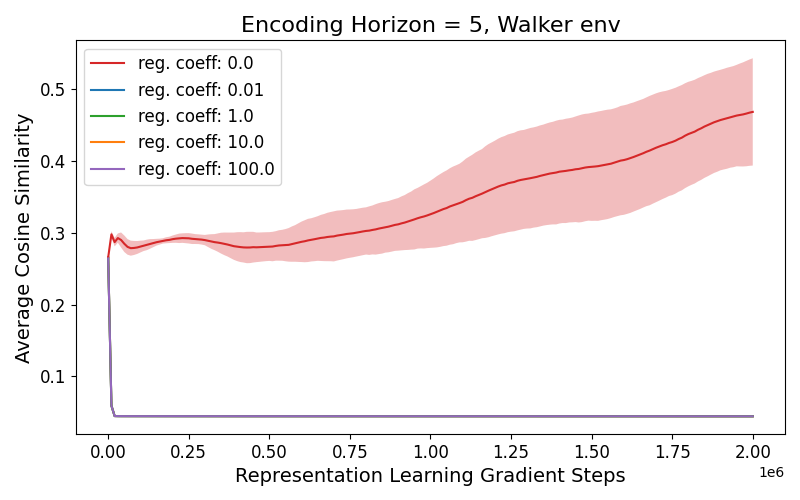}

  \caption{Evaluating the impact of Orthogonality Regularization on representations learned across four environments: Cheetah (top left), Pointmass (top right), Quadruped (bottom left), and Walker (bottom right).}
  \label{fig:ortho_ablations_2}
\end{figure}

\textbf{How does encoding horizon impact performance?}
As discussed in section~\ref{sec:method}, \texttt{RLDP} is trained with the objective to do latent next state prediction from latent current state and action. This prediction can be done multiple steps into the future latent states (equation~\ref{eq:latent_dynamics_prediction}), depending on the choice of encoding horizon $H$. 

In this section, we examine if the choice of encoding horizon impacts performance. To this end, we set the orthogonality regularization coefficient $\lambda = 1.0$ and sweep over encoding horizon $(1,5,10,20)$. 

The results are presented in figure~\ref{fig:eh_ablations}. The average performance across environments is relatively stable with a small dip at $H=10$, indicating that encoding horizon does not significantly impact performance. For our experiments, we use encoding horizon $H = 1$ or $H = 5$ depending on the setting. Specific encoding horizon setting for each experiment is discussed in section~\ref{sec:appendix_implementation}. We do not choose higher encoding horizon $H=20$ despite comparable performance in figure~\ref{fig:eh_ablations} because higher encoding horizon can result in slower encoder training. This is because each additional future state prediction involves a forward pass through the encoder network.
\begin{wrapfigure}[]{r}{0.59\textwidth}  
  \centering
  \includegraphics[width=0.58\textwidth]{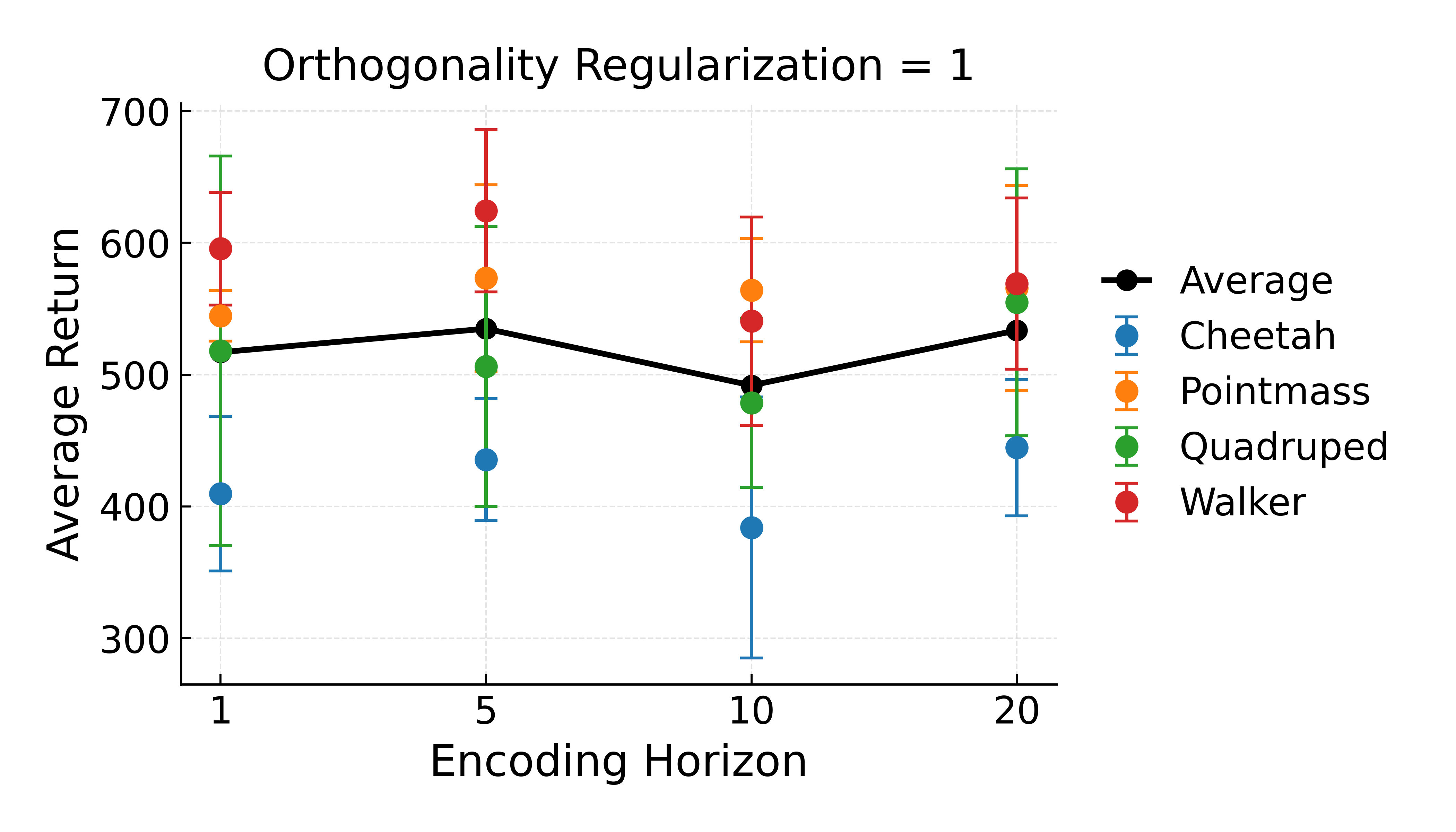}
  \caption{Evaluating the impact of Encoding Horizon}
  \vspace{-10pt}                        
  \label{fig:eh_ablations}
\end{wrapfigure}

\textbf{What is important for the encoder architecture?}

\label{sec:appendix_RLDP_encarch}
In this section, we aim to ablate components of the encoder map to understand which factors contribute to \texttt{RLDP}'s performance. For this setting, we fix encoding horizon $H=5$ and orthogonality regularization coefficient $\lambda = 1.0$.

We focus on two components of the encoder architecture -- linear layer $A$ and spherical normalization $SN$ on $g$. 

We compare the complete \texttt{RLDP} encoder network with its variations -- a) \textit{\texttt{RLDP} w/o SN} where spherical normalization is replaced with an identity mapping; b) \textit{\texttt{RLDP} w/o A} where A is replaced with an identity mapping; c) \textit{\texttt{RLDP} w/o SN \& A} where A is also replaced with an identity mapping.

\begin{wraptable}[21]{r}{0.70\textwidth} 
\centering
\scriptsize
\setlength{\tabcolsep}{4pt}
\renewcommand{\arraystretch}{1.08}
\begin{tabular}{l|l|cccc}
\toprule
 & Task & \texttt{RLDP} & \texttt{RLDP} w/o SN & \texttt{RLDP} w/o A & \texttt{RLDP} w/o SN \& A \\
\midrule

\multirow{5}{*}{\rotatebox[origin=c]{+90}{\textbf{Walker}}}
 & Stand & \textbf{890.40$\pm$27.33} & 860.74$\pm$62.47 & 810.79$\pm$100.90 & 881.69$\pm$6.92 \\
 & Run & \textbf{334.26$\pm$49.69} & 324.04$\pm$6.73 & 290.78$\pm$26.52 & 276.30$\pm$47.21 \\
 & Walk & \textbf{779.77$\pm$137.16} & 728.29$\pm$43.09 & 715.83$\pm$92.43 & 583.60$\pm$28.26 \\
 & Flip & 492.94$\pm$22.79 & \textbf{501.59$\pm$45.04} & 477.95$\pm$37.88 & 447.73$\pm$33.59 \\
 & \textbf{Average(*)} & 624.34 & 603.66 & 573.84 & 547.33 \\
\midrule

\multirow{5}{*}{\rotatebox[origin=c]{+90}{\textbf{Cheetah}}}
 & Run & \textbf{157.12$\pm$29.92} & 84.99$\pm$67.31 & 115.25$\pm$14.13 & 118.67$\pm$32.67 \\
 & Run Backward & 170.52$\pm$15.30 & \textbf{193.69$\pm$40.10} & 192.20$\pm$42.07 & 156.56$\pm$45.98 \\
 & Walk & \textbf{592.92$\pm$104.66} & 387.50$\pm$244.76 & 526.02$\pm$52.89 & 559.82$\pm$177.29 \\
 & Walk Backward & 821.51$\pm$50.62 & \textbf{838.12$\pm$145.37} & 836.29$\pm$173.10 & 668.46$\pm$186.17 \\
 & \textbf{Average(*)} & 435.52 & 376.08 & 417.44 & 375.88 \\
\midrule

\multirow{5}{*}{\rotatebox[origin=c]{+90}{\textbf{Quadruped}}}
 & Stand & \textbf{794.94$\pm$43.25} & 661.73$\pm$95.75 & 518.61$\pm$69.24 & 687.43$\pm$155.33 \\
 & Run & 457.41$\pm$74.70 & 378.97$\pm$148.47 & 358.55$\pm$53.61 & \textbf{475.07$\pm$45.66} \\
 & Walk & 465.40$\pm$185.29 & 519.39$\pm$251.11 & 384.92$\pm$119.49 & \textbf{575.32$\pm$120.82} \\
 & Jump & \textbf{733.32$\pm$55.30} & 495.98$\pm$133.81 & 319.18$\pm$55.16 & 510.55$\pm$151.18 \\
 & \textbf{Average(*)} & 612.77 & 514.02 & 395.34 & 562.09 \\
\midrule

\multirow{5}{*}{\rotatebox[origin=c]{+90}{\textbf{Pointmass}}}
 & Top Left & 890.41$\pm$60.79 & \textbf{892.13$\pm$41.74} & 886.19$\pm$10.07 & 890.89$\pm$13.06 \\
 & Top Right & 795.47$\pm$21.10 & 728.72$\pm$122.99 & \textbf{809.64$\pm$11.23} & 797.59$\pm$19.44 \\
 & Bottom Left & \textbf{805.17$\pm$20.44} & 683.12$\pm$76.22 & 730.74$\pm$63.72 & 735.42$\pm$61.83 \\
 & Bottom Right & 193.38$\pm$167.63 & 22.54$\pm$39.04 & \textbf{206.59$\pm$214.98} & 178.77$\pm$130.17 \\
 & \textbf{Average(*)} & 671.11 & 581.63 & 658.29 & 650.67 \\
\bottomrule
\end{tabular}
\caption{Study of encoder architecture. Cells show mean $\pm$ std over 4 seeds; boldface indicates the highest mean per task.}
\label{tab:encoderarch}
\end{wraptable}

The results are shown in table~\ref{tab:encoderarch}. Although per-task results are variable, the full \texttt{RLDP} encoder delivered the strongest average performance on all four domains. Removing spherical normalization lowers returns and increases variance on most tasks and removing A also degrades performance. There are isolated wins for all variants, but these do not impact the domain-level results that favor the full \texttt{RLDP} encoder network. Thus, both $SN$ and $A$ contribute meaningfully to representation learning.

\subsubsection{What do these representations look like?}

To qualitatively assess the learned state representations, we use the Pointmass environment, where we uniformly sampled 10,000 equidistant states from the underlying state space (figure~\ref{fig:tsne_pointmass} (a)).

We initialize a state representation encoder $\phi$ and pass these states through the encoder to get latent embedding before training (figure~\ref{fig:tsne_pointmass} (b)).
\begin{figure}
    \centering
    \begin{minipage}{0.65\textwidth}
        \centering
        \begin{subfigure}[t]{0.48\textwidth}
            \centering
            \includegraphics[width=\linewidth]{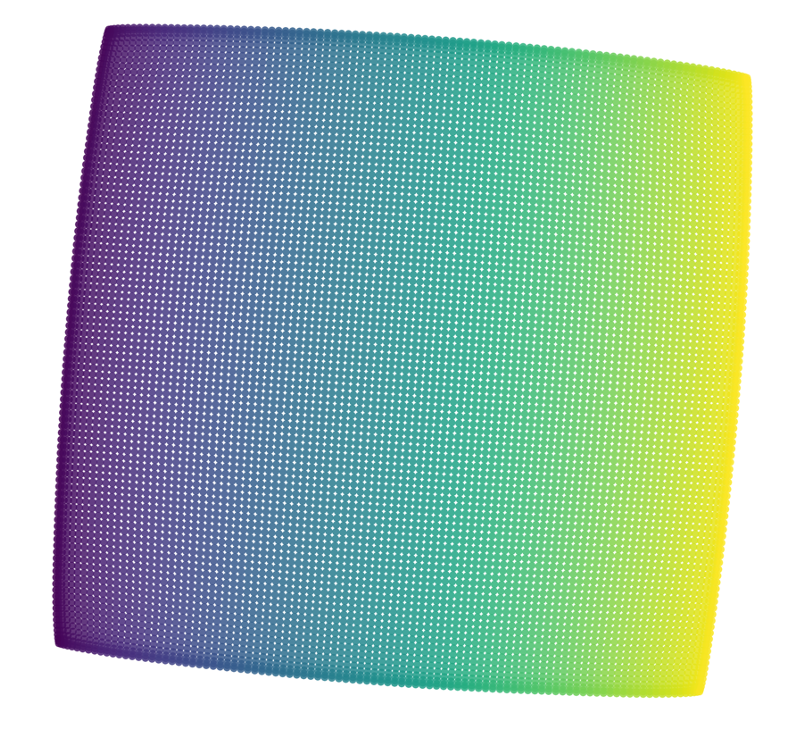}
            \caption{Pointmass observations}
            \label{fig:walker_tsne}
        \end{subfigure}
        \hfill
        \begin{subfigure}[t]{0.48\textwidth}
            \centering
            \includegraphics[width=\linewidth]{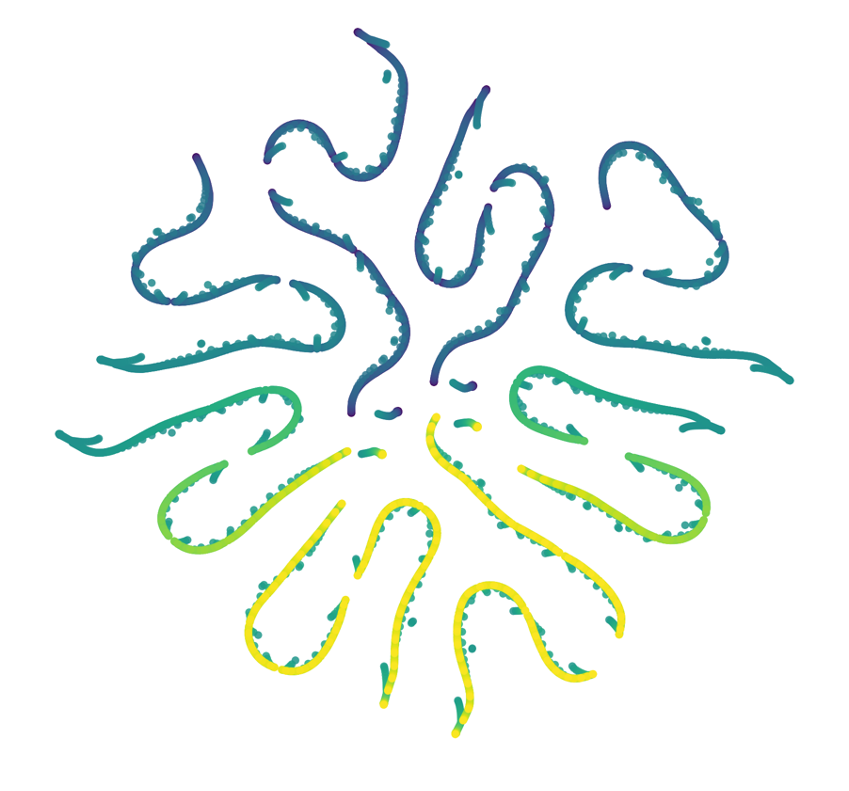}
            \caption{Latent states before encoder training}
            \label{fig:cheetah_tsne}
        \end{subfigure}

        \vskip\baselineskip
        \begin{subfigure}[t]{0.48\textwidth}
            \centering
            \includegraphics[width=\linewidth]{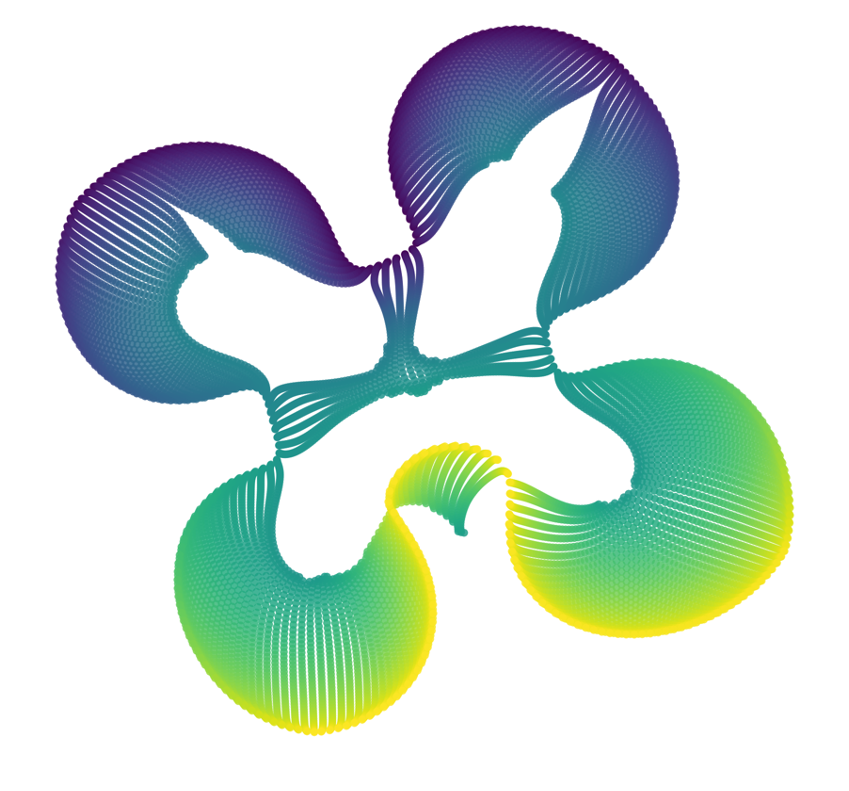}
            \caption{Latent states after training (prediction error)}
            \label{fig:quadruped_tsne}
        \end{subfigure}
        \hfill
        \begin{subfigure}[t]{0.48\textwidth}
            \centering
            \includegraphics[width=\linewidth]{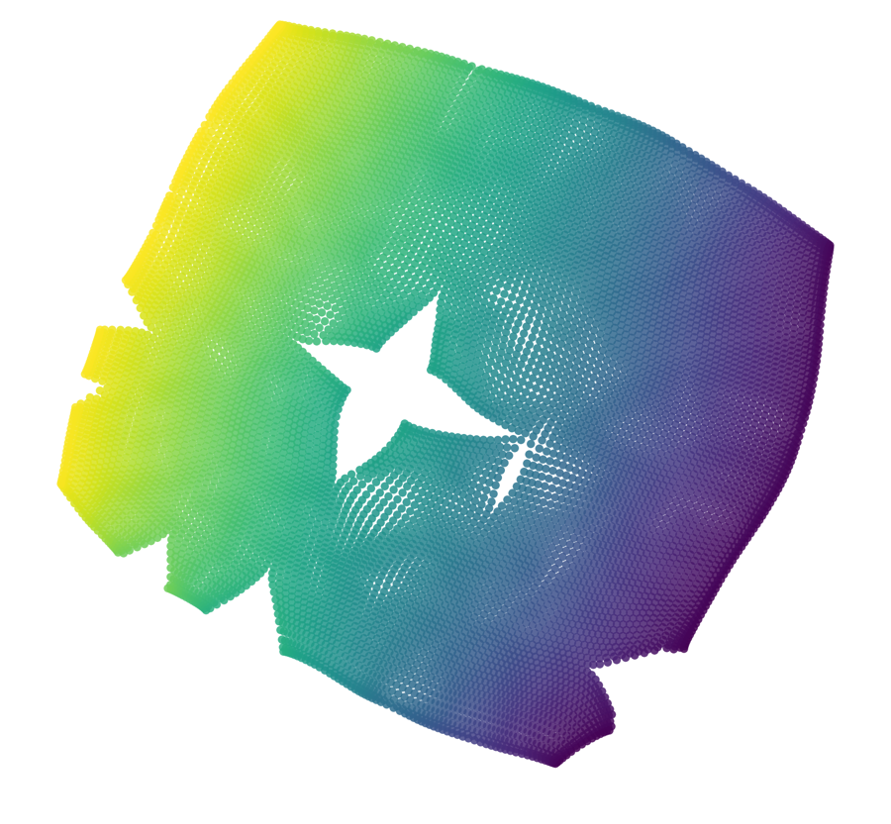}
            \caption{Latent states after training (prediction error + orthogonality)}
            \label{fig:pointmass_tsne}
        \end{subfigure}
    \end{minipage}
    \caption{t-SNE visualizations of state features in Pointmass. Each panel shows the 2D projection of 10,000 uniformly sampled states.}
    \label{fig:tsne_pointmass}
\end{figure}

We then train two encoders with different losses: a. we set $\lambda = 0.0$ in equation~\ref{eq:rldp_full_loss} and train using only latent state prediction loss (figure~\ref{fig:tsne_pointmass} (c)); b. we set $\lambda = 1.0$ in equation~\ref{eq:rldp_full_loss} and train using latent state prediction loss and orthogonality regularization.

We project all these embeddings into two dimensions t-distributed Stochastic Neighbor Embedding (t-SNE). This visualization highlights the geometric structure captured by the representation and provides intuition about how the encoder organizes states in latent space. 

The results in figure~\ref{fig:tsne_pointmass} show that training an encoder using only latent state prediction loss (figure (c)) is ineffective at capturing the layout of the environment and maps different states to similar latent representations. Using both latent state prediction loss and orthogonality regularization enables the encoder to better capture the environment layout (figure(d)).

\subsubsection{Future directions: Extension to real-world embodiments}

\texttt{RLDP} presents a simple, stable and performant approach to train behavior foundation models in applications like robotics. An agent can be promptable to obtain low-level actions with such a BFM. Recent works~\citep{tessler2025maskedmanipulator,li2025bfm} have made promising attempts to extend BFM algorithms to real-world domains and prior works have made it possible to prompt BFMs with language and videos~\citep{sikchi2024rl} which can be more intuitive interface for humans than reward functions. We believe the simplicity of this method and stability across hyperparameter choices, as demonstrated in table~\ref{tab:encoderarch}, makes it a promising candidate for real-world embodiments.

\subsection{Implementation details}
\label{sec:appendix_implementation}
In this section, we discuss the implementation details of all the methods and experiments described in the paper.
\subsubsection{Offline Zero-Shot RL}
\label{sec:appendix_implementation_offline}
We use the same architecture for forward and policy networks as presented in~\citep{touati2023does} for all representation learning methods.

The forward network $F(s,a,z)$ has two parallel embedding layers that take in $(s,a)$ and $(s,z)$ independently using feedforward networks with a single hidden layer of 1024 units, projecting to 512 dimensions. Their outputs are concatenated and passed into two separate feedforward heads (each with one hidden layer of 1024 units), which output a $d$-dimensional vector. 

The policy network $\pi(s,z)$ has two parallel embedding layers that take inputs $s$ and $(s,z)$ and embeds them similar to the forward network (one hidden layer of 1024 units mapping to 512 dimensions). The outputs of the embedding layers are concatenated, and then passed into another single-hidden-layer feedforward network (1024 units) to produce an action vector of dimension $d_A$. A final Tanh activation ensures that actions lie in the space $[-1,1]^{d_A}$.

\textbf{For results in table~\ref{table:dmc_rep_size_same}:}

For all methods, the backward representation network $B(s)$ is implemented as a feedforward neural network with two hidden layers of 512 units each, mapping a state $s$ to a $512$-dimensional embedding. 

\textbf{For results in table~\ref{table:diff_rep_size}:}

For \textit{\texttt{RLDP}}, we sweep over representation dimensions $(64, 128, 256, 512, 1024)$ and report the results for the dimension that achieves the highest average performance across all tasks within each environment.

For \textit{FB}, \textit{PSM}, \textit{HILP}, and \textit{Laplacian}, we use the representation dimensions previously identified as optimal for each respective method.

\subsubsection{Online Zero-Shot RL}
\label{sec:appendix_implementation_online}
Results for oracle baseline, FB-CPR, are taken from~\cite{tirinzoni2025zero}, where the model was trained for 30M environment steps and averaged across five seeds.

For the offline representation learning methods (HILP, PSM, FB, \texttt{RLDP}), the backward representation network $B(s)$ follows the architecture of the backward network of FB-CPR. It is a 2-layer MLP with 256 hidden dimension that maps a state $s$ to a 256-dimensional embedding. We train this for 2 million timesteps on a dataset provided by~\cite{tirinzoni2025zero}, which is generated by online training an FB-CPR agent for 30 million environment steps and saving the final 5 million steps. The \texttt{RLDP} representations are trained with encoding horizon 1. 

We integrate the learned representation network into an FB-CPR agent to train the forward and policy networks. This training is performed online for 20 million environment steps where no updates are performed on the representation network.

\subsubsection{Low Coverage Datasets}
\label{sec:appendix_implementation_lowcov}
For all offline representation learning methods (HILP, PSM, FB, \texttt{RLDP}), the backward representation network $B(s)$ is a feedforward neural network with two hidden layers of 256 dimension that maps a state $s$ to a 512-dimensional embedding. The \texttt{RLDP} representations are trained with encoding horizon 1. 

The forward network $F(s,a,z)$ and policy network $\pi(s,z)$ follow the same architecture as FB. We introduce an additional loss term for training the policy network that resembles TD3+BC~\citep{fujimoto2021minimalist}. The policy improvement loss is defined as 

\begin{equation}
    \mathcal{L}_{P}(\pi_z) = - \lambda \psi(s,a,z)^\top z + \big(\pi_z(s) - a\big)^{2}
\end{equation}

where \[
\lambda = \frac{\alpha}{\tfrac{1}{N} \sum_{(s_i,a_i)} \big| Q(s_i,a_i) \big| }.
\]
Following~\cite{fujimoto2021minimalist}, we set $\alpha = 2.5$


\subsection{Algorithm}
In algorithm~\ref{alg:zero_shot_rl}, we present the full algorithm for pre-training and inference of BFMs. In pretraining, we present RLDP representation learning as well as successor-measure estimation and policy learning.
\label{sec:zero_shot_rl}
\begin{algorithm}[H]
\caption{}
\label{alg:zero_shot_rl}
\begin{algorithmic}[1]
  \REQUIRE Offline dataset of trajectories $\mathcal{D}$.
  \REQUIRE Randomly initialized encoder $\phi$, successor-measure model $\psi$, actor $\pi$.
  \REQUIRE Representation-learning steps $N_{\mathrm{repr}}$, total steps $N$.

  \STATE \textbf{Part I: Pretraining (offline)}
  \FOR{learning step $n = 1,2,\dots,N$}
    \IF{$n \le N_{\mathrm{repr}}$}
      \STATE Sample segment batch $\tau = \{s^i_{0:H},a^i_{0:H}\}_{i=1}^B \sim \mathcal{D}$
      \STATE $h^i_0 = \phi(s^i_0)$,\quad $h^i_{t+1} = g(h^i_t,a^i_t)^\top \mathbf{w}$
      \STATE $\mathcal{L}_d(\phi,g, w) = \mathbb{E}_{\tau^i \sim d^O}\!\left[\left\|\sum_{t=1}^H h^i_t - \bar{\phi}(s^i_t)\right\|^2\right]$
      \STATE Let $\mathcal{B}_s=\{s_t^i:i=1,\ldots,B,\ t=0,\ldots,H\}$ and $N_s=|\mathcal{B}_s|$
    \STATE $\widehat{\Sigma}_\phi \gets \frac{1}{N_s}\sum_{s\in\mathcal{B}_s}\phi(s)\phi(s)^\top$
    \STATE $\mathcal{L}_r(\phi) \gets \left\|\widehat{\Sigma}_\phi-I_d\right\|_F^2$
      \STATE $L(\phi, g, w) \gets \mathcal{L}_d(\phi,g, w) + \lambda\,\mathcal{L}_r(\phi)$
      \STATE Update $\phi, g, w$
    \ELSE
      \STATE Sample transitions $\{(s,a,s',\text{done})\} \sim \mathcal{D}$
      \STATE Sample $z\sim$ Uniform Mix\{random prior + goal-encoded\}

      \STATE \textbf{Policy Evaluation}:
      \STATE $L_{zsrl}(\psi)$ from Equation \ref{eq:zsrl_loss}
      \STATE $\psi \gets \psi - \alpha_\psi L_{zsrl}(\psi)$

      \STATE \textbf{Policy update}:
      \STATE $a \sim \pi(s,z)$
      \STATE $Q = \psi(s,a,z)\cdot z$
      \STATE $\pi \gets \pi + \alpha_\pi \nabla_\pi Q(s, \pi(s, z))$
    \ENDIF
  \ENDFOR

  \STATE

  \STATE \textbf{Part II: Inference (reward-based task embedding)}

  \REQUIRE Task specification for the test task (e.g., name, parameters).
  \STATE Set up the task-specific reward function $r_{\text{task}}(s)$ using the environment's reward routine
  
  \STATE Sample transitions $\{(s_i,a_i,s'_i)\}_{i=1}^N \sim \mathcal{D}$


  \STATE $z \gets {\dfrac{1}{N}} {\sum_i \phi(s'_i)r_{task}(s'_i)}$
\end{algorithmic}
\end{algorithm}

\subsection{Additional results}
This section details additional experiments we conducted to evaluate \texttt{RLDP} against baseline methods, visualize the successor measures learned by RLDP, and study the effect of different loss components on representation learning.
\subsubsection{Offline Zero-shot RL in DMC}

In table~\ref{table:diff_rep_size}, we compare the returns for the representation dimension found to be the best for the baseline methods and the \texttt{RLDP}. Across all methods, \texttt{RLDP} fares competitively to baselines that employ complex strategies such as FB, PSM to learn representation optimizing for successor measures across the environments despite its simplicity.

\begin{table}[H]
\scriptsize
\resizebox{\textwidth}{!}{
\begin{NiceTabular}{c|c|c|c|c|c|c}
\toprule
    & Task             & Laplace             & FB                  & HILP                & PSM         & \texttt{RLDP}       \\
\toprule
\multirow{4}{*}{\rotatebox[origin=c]{90}{\textbf{Walker}}}
  & Stand             & 243.70$\pm$151.40   & \textbf{902.63$\pm$38.94} & 607.07$\pm$165.28   & \textbf{872.61$\pm$38.81} & \textbf{877.69$\pm$45.03} \\
  & Run               & 63.65$\pm$31.02     & \textbf{392.76$\pm$31.29} & 107.84$\pm$34.24    & 351.50$\pm$19.46   & 324.85$\pm$54.57 \\
  & Walk              & 190.53$\pm$168.45   & \textbf{877.10$\pm$81.05} & 399.67$\pm$39.31    & \textbf{891.44$\pm$46.81} & 790.94$\pm$67.55 \\
  & Flip              & 48.73$\pm$17.66     & 206.22$\pm$162.27   & 277.95$\pm$59.63    & \textbf{640.75$\pm$31.88} & 491.64$\pm$37.30 \\
\midrule
\multirow{4}{*}{\rotatebox[origin=c]{90}{\textbf{Cheetah}}}
  & Run               & 96.32$\pm$35.69     & \textbf{257.59$\pm$58.51} & 68.22$\pm$47.08     & \textbf{244.38$\pm$80.00} & \textbf{236.31$\pm$20.75} \\
  & Run Backward      & 106.38$\pm$29.40    & \textbf{307.07$\pm$14.91} & 37.99$\pm$25.16     & \textbf{296.44$\pm$20.14} & \textbf{322.08$\pm$39.28} \\
  & Walk              & 409.15$\pm$56.08    & 799.83$\pm$67.51    & 318.30$\pm$168.42   & \textbf{984.21$\pm$0.49}  & 895.31$\pm$49.84 \\
  & Walk Backward     & 654.29$\pm$219.81   & \textbf{980.76$\pm$2.32}  & 349.61$\pm$236.29   & \textbf{979.01$\pm$7.73}  & \textbf{984.76$\pm$0.85} \\
\midrule
\multirow{4}{*}{\rotatebox[origin=c]{90}{\textbf{Quadruped}}}
  & Stand             & \textbf{854.50$\pm$41.47} & 740.05$\pm$107.15   & 409.54$\pm$97.59    & \textbf{842.86$\pm$82.18} & 794.94$\pm$43.25 \\
  & Run               & \textbf{412.98$\pm$54.03}    & \textbf{386.67$\pm$32.53} & 205.44$\pm$47.89    & \textbf{431.77$\pm$44.69} & \textbf{457.41$\pm$74.70} \\
  & Walk              & 494.56$\pm$62.49    & \textbf{566.57$\pm$53.22} & 218.54$\pm$86.67    & \textbf{603.97$\pm$73.67} & 465.40$\pm$185.29 \\
  & Jump              & 642.84$\pm$114.15   & 581.28$\pm$107.38   & 325.51$\pm$93.06    & 596.37$\pm$94.23   & \textbf{733.32$\pm$55.30} \\
\midrule
    
\multirow{4}{*}{\rotatebox[origin=c]{90}{\textbf{Pointmass}}}
  & Top Left          & 713.46$\pm$58.90    & 897.83$\pm$35.79    & \textbf{944.46$\pm$12.94} & 831.43$\pm$69.51 & 890.41$\pm$60.79 \\
  & Top Right         & 581.14$\pm$214.79   & 274.95$\pm$197.90   & 96.04$\pm$166.34    & 730.27$\pm$58.10 & \textbf{795.47$\pm$21.10} \\
  & Bottom Left       & 689.05$\pm$37.08    & 517.23$\pm$302.63   & 192.34$\pm$177.48   & 451.38$\pm$73.46 & \textbf{805.17$\pm$20.44} \\
  & Bottom Right      & 21.29$\pm$42.54     & 19.37$\pm$33.54     & 0.17$\pm$0.29       & \textbf{43.29$\pm$38.40} & \textbf{193.38$\pm$167.63} \\
\bottomrule 
\end{NiceTabular}
}
\caption{Comparison of zero‐shot offline RL performance between different methods. Entries in \textbf{bold} are within one standard deviation of the per-task best mean (i.e., $\mu_i \ge \mu^* - \sigma^*$) aggregated over 4 seeds.}
\label{table:diff_rep_size}
\end{table}

\subsection{Training USFAs on top of RLDP representations}

In table~\ref{table:usfa}, we examine the impact of directly learning Universal Successor Features on top of \texttt{RLDP} representations. Typically for offline zero-shot RL on \texttt{RLDP} representations, we use loss equation~\ref{eq:zsrl_loss} to update the critic network. To train USFAs, we use the following loss:
\vspace{-2mm}
\begin{equation}
\label{eq:usfa_loss}
 \mathcal{L}_{USFA}(\psi) = \mathbb{E}_{s, a, s' \sim \rho
    ,s^+ \sim \rho}[(\psi(s,a,z) - [\phi(s) + \gamma\bar\psi(s',\pi_z(s'),z)])^2]
\end{equation}

\begin{wraptable}[22]{r}{0.6\textwidth} 
  \centering
  \scriptsize
  \setlength{\tabcolsep}{4pt}
  \renewcommand{\arraystretch}{1.05}
  \begin{NiceTabular}{c|c|cc} 
  \toprule
      & Task & \texttt{RLDP} & \shortstack{Learning USFAs on\\ \texttt{RLDP} representations} \\
  \midrule
  \multirow{4}{*}{\rotatebox[origin=c]{90}{\textbf{Walker}}}
    & Stand                 & 890.40$\pm$27.33       & 854.84$\pm$64.50 \\
    & Run                   & 334.26$\pm$49.69       & 304.89$\pm$63.15 \\
    & Walk                  & 779.77$\pm$137.16      & 665.98$\pm$117.64 \\
    & Flip                  & 492.94$\pm$22.79       & 497.77$\pm$70.80 \\
  \midrule
  \multirow{4}{*}{\rotatebox[origin=c]{90}{\textbf{Cheetah}}}
    & Run                   & 157.12$\pm$29.92       & 139.00$\pm$6.72 \\
    & Run Backward          & 170.52$\pm$15.30       & 172.52$\pm$25.72 \\
    & Walk                  & 592.92$\pm$104.66      & 647.35$\pm$66.05 \\
    & Walk Backward         & 821.51$\pm$50.62       & 800.02$\pm$82.94 \\
  \midrule
  \multirow{4}{*}{\rotatebox[origin=c]{90}{\textbf{Quadruped}}}
    & Stand                 & 794.94$\pm$43.25       & 294.09$\pm$165.70 \\
    & Run                   & 457.41$\pm$74.70       & 238.42$\pm$121.54 \\
    & Walk                  & 465.40$\pm$185.29      & 326.36$\pm$111.55 \\
    & Jump                  & 733.32$\pm$55.30       & 220.66$\pm$67.59 \\
  \midrule
  \multirow{4}{*}{\rotatebox[origin=c]{90}{\textbf{Pointmass}}}
    & Top Left              & 890.41$\pm$60.79       & 723.80$\pm$55.33 \\
    & Top Right             & 795.47$\pm$21.10       & 723.79$\pm$57.89 \\
    & Bottom Left           & 805.17$\pm$20.44       & 753.66$\pm$16.59 \\
    & Bottom Right          & 193.38$\pm$167.63      & 94.39$\pm$77.81 \\
  \bottomrule
  \end{NiceTabular}
  \caption{Comparison (over 4 seeds) of zero‐shot RL performance when equation~\ref{eq:zsrl_loss} is used to train the critic and when Universal Successor Features are trained on top of the state features. For fair comparison, we set \texttt{RLDP} representation dimension $d=512$ for both methods.}
  \label{table:usfa}
\end{wraptable}
We find that across a wide range of control tasks, training a USFA module on top of \texttt{RLDP}’s state representations does not consistently outperform directly using successor measure loss~\ref{eq:zsrl_loss} for policy evaluation. Critic learned using successor measure loss achieves overlapping-best performance on most Walker, Quadruped, and Pointmass tasks, while USFAs occasionally match or slightly exceed on certain Cheetah behaviors. Overall, these results indicate that \texttt{RLDP}’s learned representations capture most of the structure required for effective zero-shot generalization using either loss. The critic trained with successor measure loss typically achieves the strongest overall performance.

\subsubsection{Online Zero-shot RL}
FB-CPR is an off-policy online unsupervised RL algorithm that introduces a latent conditional-discriminator in the form of Conditional-Policy Regularization to output policies close to an unlabeled demonstration dataset $\mathcal{M}$. The results for FB-CPR are as reported in \cite{tirinzoni2025zero}.

In table~\ref{table:humenv_full} and figure~\ref{fig:app_humenv}, we provide the full suite of results on 45 SMPL Humanoid task for all baseline methods, \texttt{RLDP}, and the oracle method FB-CPR. 

\begin{table}[H]
\centering
\scriptsize
\begin{tabular}{lccccc}
\hline
metric              & FB-CPR (Oracle)                            & FB                           & PSM                           & HILP                            & \texttt{RLDP}                          \\
\hline
crawl-0.4-0-d       & 191.75 $\pm$ 43.60                & 26.06  $\pm$ 39.43           & 38.38  $\pm$ 14.73            & 52.31  $\pm$ 23.15              & \textbf{86.48  $\pm$ 45.87}   \\
crawl-0.4-0-u       & 101.76 $\pm$ 15.90                & 8.38   $\pm$ 9.01            & 4.52   $\pm$ 7.47             & 18.59  $\pm$ 20.42              & \textbf{25.00  $\pm$ 27.32}   \\
crawl-0.4-2-d       & 19.00  $\pm$ 4.00                 & 3.05   $\pm$ 3.65            & 6.52   $\pm$ 1.27             & 8.97   $\pm$ 5.62               & \textbf{11.21  $\pm$ 4.72}    \\
crawl-0.4-2-u       & 15.02  $\pm$ 6.03                 & 0.79   $\pm$ 1.07            & 0.64   $\pm$ 0.82             & \textbf{2.95   $\pm$ 1.37}      & 2.76   $\pm$ 3.73             \\
crawl-0.5-0-d       & 131.13 $\pm$ 64.97                & 43.27  $\pm$ 34.66           & 46.17  $\pm$ 13.56            & 52.41  $\pm$ 27.82              & \textbf{55.82  $\pm$ 18.40}   \\
crawl-0.5-0-u       & 101.92 $\pm$ 16.39                & 4.04   $\pm$ 5.87            & 4.18   $\pm$ 4.83             & \textbf{21.14  $\pm$ 24.26}     & 20.22  $\pm$ 30.91            \\
crawl-0.5-2-d       & 22.93  $\pm$ 5.31                 & 4.14   $\pm$ 4.10            & 5.64   $\pm$ 1.79             & \textbf{8.64   $\pm$ 4.21}      & 5.69   $\pm$ 2.18             \\
crawl-0.5-2-u       & 15.81  $\pm$ 6.10                 & 0.94   $\pm$ 0.99            & 0.77   $\pm$ 0.70             & 2.67   $\pm$ 1.03               & \textbf{2.95   $\pm$ 3.28}    \\
crouch-0            & 226.28 $\pm$ 28.17                & 55.12  $\pm$ 47.09           & \textbf{92.70  $\pm$ 60.86}   & 72.94  $\pm$ 76.25              & 4.83   $\pm$ 5.28             \\
headstand           & 41.27  $\pm$ 10.20                & 0.00   $\pm$ 0.00            & 0.00   $\pm$ 0.01             & 0.11   $\pm$ 0.16               & \textbf{2.63   $\pm$ 1.99}    \\
jump-2              & 34.88  $\pm$ 3.52                 & \textbf{29.08  $\pm$ 3.76}   & 21.21  $\pm$ 11.60            & 12.25  $\pm$ 14.05              & 27.89  $\pm$ 1.66             \\
lieonground-down    & 193.50 $\pm$ 18.89                & 35.41  $\pm$ 26.08           & 63.87  $\pm$ 26.68            & 69.79  $\pm$ 27.02              & \textbf{74.69  $\pm$ 30.03}   \\
lieonground-up      & 193.66 $\pm$ 33.18                & 20.83  $\pm$ 12.70           & 13.92  $\pm$ 5.54             & 30.81  $\pm$ 1.37               & \textbf{54.38  $\pm$ 31.06}   \\
move-ego--90-2      & 210.99 $\pm$ 6.55                 & \textbf{207.47 $\pm$ 9.92}   & 179.67 $\pm$ 49.64            & 196.81 $\pm$ 40.36              & 178.82 $\pm$ 45.10            \\
move-ego--90-4      & 202.99 $\pm$ 9.33                 & \textbf{161.84 $\pm$ 12.65}  & 102.35 $\pm$ 35.15            & 102.98 $\pm$ 40.47              & 99.96  $\pm$ 32.58            \\
move-ego-0-0        & 274.68 $\pm$ 1.48                 & 261.63 $\pm$ 1.76            & 264.32 $\pm$ 1.95             & \textbf{267.57 $\pm$ 0.95}      & 178.92 $\pm$ 92.57            \\
move-ego-0-2        & 260.93 $\pm$ 5.21                 & 87.46  $\pm$ 21.99           & 252.75 $\pm$ 15.03            & \textbf{260.35 $\pm$ 2.58}      & 250.92 $\pm$ 6.75             \\
move-ego-0-4        & 235.44 $\pm$ 29.42                & 133.47 $\pm$ 33.86           & \textbf{234.14 $\pm$ 8.81}    & 233.02 $\pm$ 14.75              & 201.90 $\pm$ 38.55            \\
move-ego-180-2      & 227.34 $\pm$ 27.01                & \textbf{232.14 $\pm$ 20.35}  & 141.56 $\pm$ 32.41            & 139.12 $\pm$ 83.74              & 222.83 $\pm$ 28.29            \\
move-ego-180-4      & 205.54 $\pm$ 14.40                & \textbf{109.04 $\pm$ 27.89}  & 71.42  $\pm$ 19.98            & 53.37  $\pm$ 25.65              & 81.92  $\pm$ 29.90            \\
move-ego-90-2       & 210.99 $\pm$ 6.55                 & 217.16 $\pm$ 26.35           & 214.64 $\pm$ 37.08            & 178.96 $\pm$ 45.28              & \textbf{221.43 $\pm$ 33.90}   \\
move-ego-90-4       & 202.99 $\pm$ 9.33                 & 154.20 $\pm$ 41.82           & 104.73 $\pm$ 20.95            & 102.51 $\pm$ 63.37              & \textbf{160.31 $\pm$ 37.02}   \\
move-ego-low--90-2  & 221.37 $\pm$ 35.35                & 75.28  $\pm$ 29.80           & 76.96  $\pm$ 49.83            & \textbf{126.80 $\pm$ 80.76}     & 30.15  $\pm$ 26.04            \\
move-ego-low-0-0    & 215.61 $\pm$ 27.63                & 168.33 $\pm$ 5.95            & 150.34 $\pm$ 62.07            & \textbf{188.29 $\pm$ 49.41}     & 133.68 $\pm$ 52.10            \\
move-ego-low-0-2    & 207.27 $\pm$ 58.01                & 82.66  $\pm$ 20.55           & 73.60  $\pm$ 49.86            & \textbf{104.77 $\pm$ 23.00}     & 66.84  $\pm$ 44.92            \\
move-ego-low-180-2  & 65.20  $\pm$ 32.64                & \textbf{52.38  $\pm$ 27.67}  & 46.28  $\pm$ 22.28            & 43.90  $\pm$ 39.86              & 28.71  $\pm$ 12.52            \\
move-ego-low-90-2   & 222.81 $\pm$ 21.94                & \textbf{100.75 $\pm$ 39.27}  & 53.20  $\pm$ 21.26            & 85.54  $\pm$ 82.04              & 63.19  $\pm$ 42.99            \\
raisearms-h-h       & 199.88 $\pm$ 42.03                & 192.49 $\pm$ 101.91          & 94.64  $\pm$ 94.26            & 171.41 $\pm$ 71.90              & \textbf{217.09 $\pm$ 34.35}   \\
raisearms-h-l       & 167.98 $\pm$ 82.03                & \textbf{226.33 $\pm$ 35.55}  & 90.57  $\pm$ 68.37            & 82.42  $\pm$ 43.38              & 201.33 $\pm$ 87.51            \\
raisearms-h-m       & 104.26 $\pm$ 81.69                & 100.49 $\pm$ 76.12           & 61.82  $\pm$ 20.38            & 112.16 $\pm$ 76.75              & \textbf{155.36 $\pm$ 85.85}   \\
raisearms-l-h       & 243.16 $\pm$ 19.18                & \textbf{255.41 $\pm$ 1.55}   & 128.56 $\pm$ 63.06            & 136.49 $\pm$ 85.25              & 233.82 $\pm$ 27.26            \\
raisearms-l-l       & 270.43 $\pm$ 0.37                 & 251.82 $\pm$ 9.70            & \textbf{260.48 $\pm$ 3.52}    & 258.50 $\pm$ 6.07               & 39.87  $\pm$ 34.16            \\
raisearms-l-m       & 97.66  $\pm$ 81.17                & 135.05 $\pm$ 80.31           & \textbf{254.91 $\pm$ 3.78}    & 91.49  $\pm$ 46.58              & 217.42 $\pm$ 39.30            \\
raisearms-m-h       & 75.05  $\pm$ 69.32                & 79.25  $\pm$ 31.99           & 41.58  $\pm$ 13.58            & \textbf{126.62 $\pm$ 80.07}     & 107.70 $\pm$ 79.18            \\
raisearms-m-l       & 134.83 $\pm$ 70.28                & 218.22 $\pm$ 46.82           & 173.28 $\pm$ 72.83            & 155.21 $\pm$ 71.93              & \textbf{220.67 $\pm$ 50.89}   \\
raisearms-m-m       & 87.25  $\pm$ 98.42                & 179.60 $\pm$ 74.63           & 109.47 $\pm$ 91.62            & 82.36  $\pm$ 38.59              & \textbf{211.30 $\pm$ 48.89}   \\
rotate-x--5-0.8     & 2.29   $\pm$ 1.78                 & 1.69   $\pm$ 2.32            & 1.49   $\pm$ 1.43             & 0.29   $\pm$ 0.18               & \textbf{2.44   $\pm$ 2.02}    \\
rotate-x-5-0.8      & 7.42   $\pm$ 5.69                 & 2.55   $\pm$ 1.29            & 0.53   $\pm$ 0.43             & 0.32   $\pm$ 0.28               & \textbf{6.43   $\pm$ 3.15}    \\
rotate-y--5-0.8     & 199.08 $\pm$ 51.78                & 5.87   $\pm$ 3.63            & 2.13   $\pm$ 2.17             & 1.04   $\pm$ 0.11               & \textbf{8.18   $\pm$ 4.71}    \\
rotate-y-5-0.8      & 217.70 $\pm$ 43.67                & 4.86   $\pm$ 1.44            & 1.58   $\pm$ 0.44             & 0.89   $\pm$ 0.13               & \textbf{14.03  $\pm$ 12.12}   \\
rotate-z--5-0.8     & 124.95 $\pm$ 17.61                & 0.72   $\pm$ 0.79            & 0.42   $\pm$ 0.30             & 0.31   $\pm$ 0.23               & \textbf{17.09  $\pm$ 9.10}    \\
rotate-z-5-0.8      & 95.23  $\pm$ 15.75                & \textbf{1.71   $\pm$ 1.67}   & 0.39   $\pm$ 0.37             & 0.38   $\pm$ 0.22               & 0.66   $\pm$ 0.76             \\
sitonground         & 199.44 $\pm$ 22.15                & 5.88   $\pm$ 4.75            & 27.39  $\pm$ 22.19            & 26.12  $\pm$ 21.69              & \textbf{97.88  $\pm$ 34.91}   \\
split-0.5           & 232.18 $\pm$ 20.26                & 12.64  $\pm$ 14.48           & 34.31  $\pm$ 32.98            & \textbf{87.22  $\pm$ 5.92}      & 55.50  $\pm$ 33.46            \\
split-1             & 117.67 $\pm$ 61.27                & 6.80   $\pm$ 9.14            & 6.12   $\pm$ 7.17             & 6.13   $\pm$ 5.72               & \textbf{13.02  $\pm$ 16.90}   \\
\hline
\end{tabular}
\caption{Comparing (over 4 seeds) FB, PSM, HILP, \texttt{RLDP} performance on SMPL Humanoid. FB-CPR (online oracle baseline) results are from \cite{tirinzoni2025zero}. Bold indicates the best mean across methods.}
\label{table:humenv_full}
\end{table}

\begin{figure}[H]
  \centering
  \includegraphics[width=1.00\textwidth]{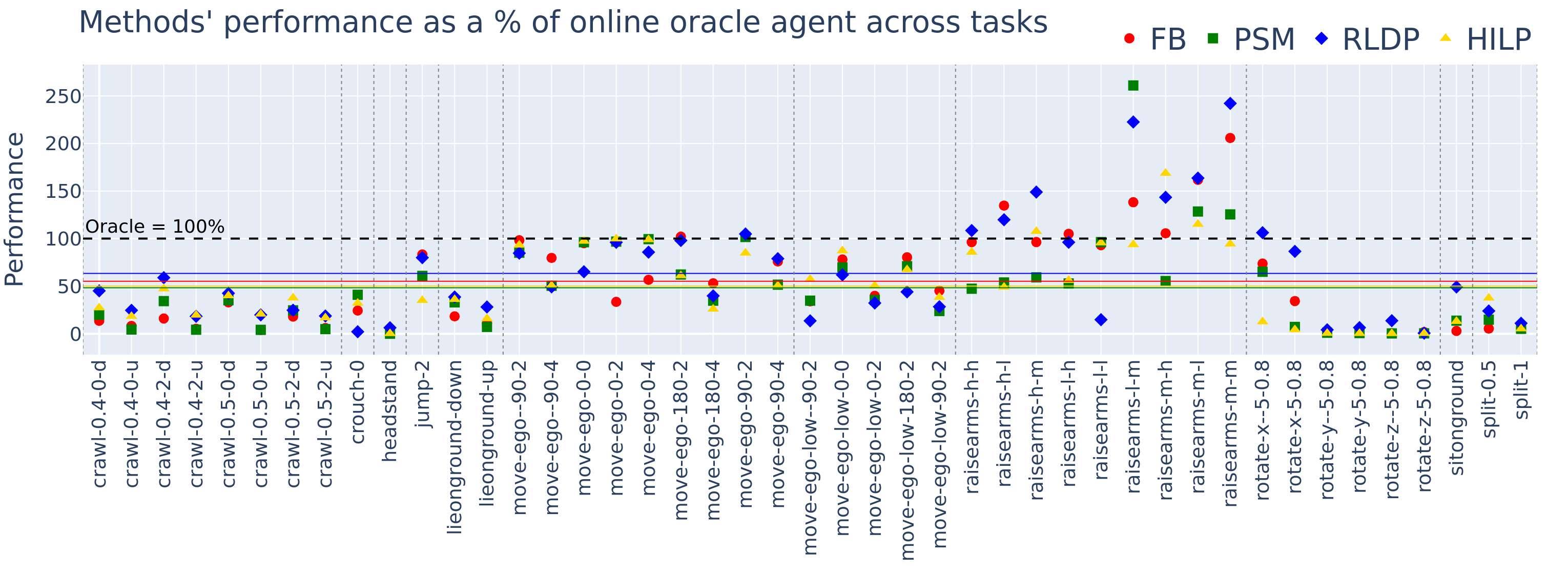}
  \caption{Evaluating offline representation learning methods using an online oracle policy in high dimensional 3D humanoid. Solid lines shows mean performance across tasks for each method.}
  \label{fig:app_humenv}
\end{figure}

  


\subsection{Full results on D4RL}
\label{sec:low_cov}

Table~\ref{table:D4RLresults} shows the normalized average returns of \texttt{RLDP} and baseline methods on six low-coverage D4RL environments over 10 seeds. \texttt{RLDP} achieves the best mean performance on 5/6 tasks. Using Welch’s t-tests with Holm correction, \texttt{RLDP} significantly outperforms FB on all tasks and significantly outperforms PSM on 5/6 tasks. \texttt{RLDP}’s gains over HILP are statistically significant on 3/6 tasks, while differences on Hopper-medium-expert and Walker2d-medium-expert are not statistically decisive with 10 seeds due to high variance. On Hopper-medium, HILP has the highest mean but the HILP–\texttt{RLDP} difference is not significant under Welch’s test, indicating comparable performance under seed variability.
\begin{table}
\centering   
\scriptsize
\begin{tabular}{lcccc}
\toprule
\textbf{Task} & FB & PSM & HILP & \texttt{RLDP} \\
\midrule
halfcheetah-medium-expert-v2    & 52.17$\pm$11.37      & 49.92$\pm$26.89     & 68.47$\pm$8.20       & \textbf{86.03$\pm$8.36}     \\
halfcheetah-medium-v2           & 39.27$\pm$8.71      & 42.64$\pm$0.64     & 43.85$\pm$1.49       & \textbf{49.08$\pm$1.93}     \\
hopper-medium-expert-v2         & 54.64$\pm$19.47     & 14.59$\pm$26.35      & 68.18$\pm$18.19       & \textbf{77.21$\pm$16.77}    \\
hopper-medium-v2                & 43.75$\pm$6.65      & 33.49$\pm$26.71    & \textbf{52.19$\pm$4.02 }      & 44.93$\pm$13.08      \\
walker2d-medium-expert-v2       & 60.17$\pm$28.94     & 79.32$\pm$41.16    & 93.72$\pm$21.12       & \textbf{103.87$\pm$3.31}     \\
walker2d-medium-v2              & 43.72$\pm$24.95     & 55.70$\pm$22.01    & 56.34$\pm$15.08       & \textbf{83.83$\pm$2.66}      \\
\bottomrule
\end{tabular}
\caption{Normalized returns comparing FB, PSM, HILP, and \texttt{RLDP} in low-coverage setting. \texttt{RLDP} shows significant gains over approaches that rely on explicit Bellman backups for representation learning. Table shows mean$\pm$std over 10 seeds; \textbf{Boldface} indicates the highest mean return per environment. Statistical comparisons use per-seed returns with Welch’s t-test and Holm correction; cases where the bolded method is not significantly better than the runner-up are discussed in the text..}
\label{table:D4RLresults}
\end{table}

\subsection{Visualizations of Learned Successor Measures}

We used a four room gridworld (as used in \cite{fb, agarwal2024proto}) to plot the successor measures learned by \texttt{RLDP}. We collect a dataset of all transitions and run \texttt{RLDP} with horizon $1$ to learn representations $\phi$, successor features $\psi$ and policy $\pi$. Note that any policy parameterized by latent $z$ produces a successor measure parameterized by $M^{\pi_z}(s, a, s^+) = \psi(s, a, z)^T \phi(s^+)$. We have plotted the observed successor measures: $M^{\pi_z}(s_0, a_0, s^+)$, where we fix $s_0$ and $a_0$ for a few different $z$ in figure \ref{fig:sm_vis}. We have fixed $s_0$ to the corner state and $a_0$ to action: \texttt{right}. We have plotted the policy for visualizing the policy represent by $z$.

\begin{figure}[h]
    \centering

    \begin{subfigure}[b]{0.45\textwidth}
        \centering
        \includegraphics[width=\textwidth]{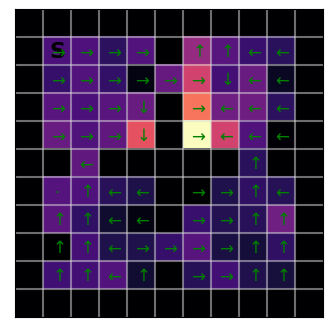}
        \caption{}
        \label{fig:suba}
    \end{subfigure}
    \hfill
    \begin{subfigure}[b]{0.45\textwidth}
        \centering
        \includegraphics[width=\textwidth]{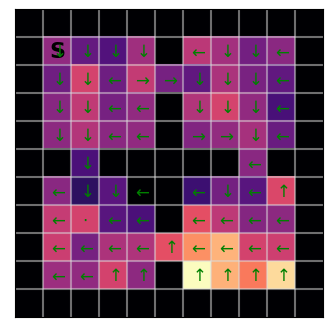}
        \caption{}
        \label{fig:subb}
    \end{subfigure}

    \vspace{0.5cm}

    \begin{subfigure}[b]{0.45\textwidth}
        \centering
        \includegraphics[width=\textwidth]{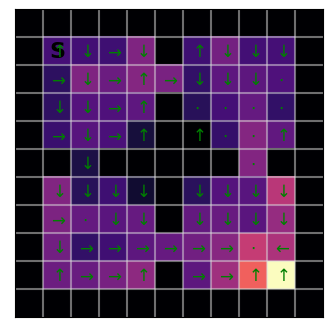}
        \caption{}
        \label{fig:subc}
    \end{subfigure}
    \hfill
    \begin{subfigure}[b]{0.45\textwidth}
        \centering
        \includegraphics[width=\textwidth]{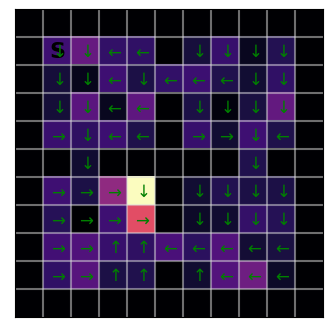}
        \caption{}
        \label{fig:subd}
    \end{subfigure}

    \caption{Visualization of successor measures $M^{\pi_z}(s_0, a_0, s^+)$ for randomly sampled $z$ (a) and (b); and goal-conditioned $z$ (c) and (d).}
    \label{fig:sm_vis}
\end{figure}


\subsection{Evaluating the effect of different losses on representation learning}
\label{sec:loss_study}
In table~\ref{tab:lossabl_D4RL}, we ablate the objectives used to learn representations and compare (a) Bellman-style loss used in FB, (b) latent prediction loss used in \texttt{RLDP}, and c) orthogonality regularization, that both FB and \texttt{RLDP} use. We evaluate these loss objectives individually and combined with unit scaling.

\begin{multline}
\label{eq:app_Bellmanloss}
\mathcal{L}_{\text{Bellman}}(\phi, \psi) = -\mathbb{E}_{s, a, s' \sim \rho}[\psi(s, a, z)^T \phi(s')] 
    \\+ \frac{1}{2} \mathbb{E}_{s, a, s' \sim \rho
    ,s^+ \sim \rho}[(\psi(s, a, z)^T \phi(s^+) - \gamma\bar \psi(s', \pi_z(s'), z)^T \bar \phi(s^+))^2]
\end{multline}

Let $\mu$ denote the state marginal of the offline transition distribution $\rho$, and define
\begin{equation}\label{eq:app_ortholoss}
\Sigma_\phi
:=
\mathbb{E}_{s\sim\mu}
\left[\phi(s)\phi(s)^\top\right],
\qquad
\mathcal{L}_{\text{Ortho}}(\phi)
=
\left\|\Sigma_\phi-I_d\right\|_F^2.
\end{equation}

\begin{equation}\label{eq:app_latent_dynamics_prediction}
\mathcal{L}_{\text{Prediction}}(\phi,g,w)
=
\mathbb{E}_{\tau^i\sim d^O}\!\left[
\sum_{t=0}^{H}
\left\|
h^i_{t}
-
\bar{\phi}(s^i_{t})
\right\|_2^{2}
\right],
\qquad
h^i_{0}=\phi(s^i_{0}),
\\
h^i_{t+1}=g(h^i_t,a^i_t)\,w
\end{equation}

The results in table~\ref{tab:lossabl_D4RL} highlight that the interaction between different objectives matters much more than any individual loss in isolation. Across all three expert datasets, we find that a combined objective of prediction loss and orthogonality regularization yield the largest returns on average. In addition, adding Bellman backup to this objective with unit scaling results in inconsistent final returns.

Overall, these results support the design choice of using latent prediction with orthogonal regularization as the primary representation learning objective in low-coverage settings due to its high-performance and robustness across multiple settings. 
\begin{table}[h]
\centering
\small
\setlength{\tabcolsep}{6pt}
\begin{tabular}{l c}
\toprule
Environment & Returns (mean $\pm$ std) \\
\midrule

\multicolumn{2}{l}{\texttt{halfcheetah-medium-expert}} \\
\addlinespace[2pt]
$\mathcal{L}_{\text{Ortho}}$              & $51.36 \pm 6.93$ \\
$\mathcal{L}_{\text{Bellman}}$            & $64.42 \pm 7.59$ \\
$\mathcal{L}_{\text{Prediction}}$         & $49.00 \pm 4.34$ \\
$\mathcal{L}_{\text{Bellman}} + \mathcal{L}_{\text{Prediction}} + \mathcal{L}_{\text{Ortho}}$             & $89.45 \pm 3.81$ \\
$\mathcal{L}_{\text{Bellman}} + \mathcal{L}_{\text{Ortho}}$ (FB)      & $55.46 \pm 7.75$ \\
$\mathcal{L}_{\text{Prediction}} + \mathcal{L}_{\text{Ortho}}$ (\texttt{RLDP}) & $88.55 \pm 6.31$ \\

\midrule
\multicolumn{2}{l}{\texttt{hopper-medium-expert}} \\
\addlinespace[2pt]
$\mathcal{L}_{\text{Ortho}}$              & $56.13 \pm 3.40$ \\
$\mathcal{L}_{\text{Bellman}}$            & $59.02 \pm 12.19$ \\
$\mathcal{L}_{\text{Prediction}}$         & $54.63 \pm 3.51$ \\
$\mathcal{L}_{\text{Bellman}} + \mathcal{L}_{\text{Prediction}} + \mathcal{L}_{\text{Ortho}}$              & \textbf{$49.25 \pm 11.90$} \\
$\mathcal{L}_{\text{Bellman}} + \mathcal{L}_{\text{Ortho}}$ (FB)      & $49.93 \pm 28.78$ \\
$\mathcal{L}_{\text{Prediction}} + \mathcal{L}_{\text{Ortho}}$ (\texttt{RLDP}) & \textbf{$75.53 \pm 12.70$} \\

\midrule
\multicolumn{2}{l}{\texttt{walker2d-medium-expert}} \\
\addlinespace[2pt]
$\mathcal{L}_{\text{Ortho}}$              & $95.10 \pm 12.28$ \\
$\mathcal{L}_{\text{Bellman}}$            & $89.12 \pm 19.36$ \\
$\mathcal{L}_{\text{Prediction}}$         & $97.52 \pm 14.27$ \\
$\mathcal{L}_{\text{Bellman}} + \mathcal{L}_{\text{Prediction}} + \mathcal{L}_{\text{Ortho}}$              & $27.10 \pm 22.51$ \\
$\mathcal{L}_{\text{Bellman}} + \mathcal{L}_{\text{Ortho}}$ (FB)      & $36.85 \pm 14.89$ \\
$\mathcal{L}_{\text{Prediction}} + \mathcal{L}_{\text{Ortho}}$ (\texttt{RLDP}) & $101.30 \pm 3.92$ \\

\bottomrule
\end{tabular}
\caption{Results in D4RL expert environments across different loss combinations aggregated over 4 seeds.}
\label{tab:lossabl_D4RL}
\end{table}

We leave a more exhaustive study of this setting to future work, including evaluating these objectives on a wider set of environments and exploring principled scaling for the loss components.

\end{document}